  \providecommand\BibTeX{{%
    \normalfont B\kern-0.5em{\scshape i\kern-0.25em b}\kern-0.8em\TeX}}}
\newcommand{\chen}[1]{\textcolor{black}{#1}}
\newcommand{\mset}[1]{\left\{\kern-.5em\left\{ #1 \right\}\kern-.5em\right\}}
\newcommand{\mmset}[1]{\{\kern-.4em\{ #1 \}\kern-.4em\}}
\newtheorem*{rep@theorem}{\rep@title}
\newcommand{\newreptheorem}[2]{%
\newenvironment{rep#1}[1]{%
 \def\rep@title{#2 \ref{##1}}%
 \begin{rep@theorem}}%
 {\end{rep@theorem}}}
\newcommand{\subalign}[1]{%
  \vcenter{%
    \Let@ \restore@math@cr \default@tag
    \baselineskip\fontdimen10 \scriptfont\tw@
    \advance\baselineskip\fontdimen12 \scriptfont\tw@
    \lineskip\thr@@\fontdimen8 \scriptfont\thr@@
    \lineskiplimit\lineskip
    \ialign{\hfil$\m@th\scriptstyle##$&$\m@th\scriptstyle{}##$\crcr
      #1\crcr
    }%
  }
}
\def\eqref#1{equation~\ref{#1}}
\def\1{\bm{1}}
\DeclareMathAlphabet{\mathsfit}{\encodingdefault}{\sfdefault}{m}{sl}
\SetMathAlphabet{\mathsfit}{bold}{\encodingdefault}{\sfdefault}{bx}{n}
\begin{document}

\title{Unifying Invariance and Spuriousity for Graph Out-of-Distribution via Probability of Necessity and Sufficiency}

\author{Xuexin Chen}
\affiliation{%
  \institution{Guangdong University of Technology}
  \streetaddress{P.O. Box 1212}
  \city{Guangzhou}
  \state{Guangdong}
  \country{China}
  \postcode{43017-6221}
}

\author{Ruichu Cai*}
\affiliation{%
  \institution{Guangdong University of Technology}
  \streetaddress{P.O. Box 1212}
  \city{Guangzhou}
  \state{Guangdong}
  \country{China}
  \postcode{43017-6221}
}
\email{cairuichu@gmail.com}

\author{Kaitao Zheng}
\affiliation{%
  \institution{Guangdong University of Technology}
  \streetaddress{P.O. Box 1212}
  \city{Guangzhou}
  \state{Guangdong}
  \country{China}
  \postcode{43017-6221}
}


\author{Zhifan Jiang}
\affiliation{%
  \institution{Guangdong University of Technology}
  \streetaddress{P.O. Box 1212}
  \city{Guangzhou}
  \state{Guangdong}
  \country{China}
  \postcode{43017-6221}
}

\author{Zhengting Huang}
\affiliation{%
  \institution{Guangdong University of Technology}
  \streetaddress{P.O. Box 1212}
  \city{Guangzhou}
  \state{Guangdong}
  \country{China}
  \postcode{43017-6221}
}

\author{Zhifeng Hao}
\affiliation{%
  \institution{Shantou University}
  \streetaddress{8600 Datapoint Drive}
  \city{Shantou}
  \state{Guangdong}
  \country{China}
  \postcode{78229}}


\author{Zijian Li}
\affiliation{%
  \institution{Mohamed bin Zayed University of Artificial Intelligence}
  \streetaddress{1 Th{\o}rv{\"a}ld Circle}
  \city{Masdar City}
  \country{Abu Dhabi}}


\renewcommand{\shortauthors}{Trovato and Tobin, et al.}
\newtheorem{assumption}[theorem]{Assumption}

\begin{abstract}
Graph Out-of-Distribution (OOD), requiring that models trained on biased data generalize to the unseen test data, has a massive of real-world applications. One of the most mainstream methods is to extract the invariant subgraph by aligning the original and augmented data with the help of environment augmentation. However, these solutions might lead to the loss or redundancy of semantic subgraph and further result in suboptimal generalization. To address this challenge, we propose a unified framework to exploit the \textbf{P}robability of \textbf{N}ecessity and \textbf{S}ufficiency to extract the \textbf{I}nvariant \textbf{S}ubstructure (\textbf{PNSIS}). Beyond that, this framework further leverages the spurious subgraph to boost the generalization performance in an ensemble manner to enhance the robustness on the noise data. Specificially, we first consider the data generation process for graph data. Under mild conditions, we show that the invariant subgraph can be extracted by minimizing an upper bound, which is built on the theoretical advance of probability of necessity and sufficiency. To further bridge the theory and algorithm, we devise the \textbf{PNSIS} model, which involves an invariant subgraph extractor for invariant graph learning as well invariant and spurious subgraph classifiers for generalization enhancement. Experimental results demonstrate that our \textbf{PNSIS} model outperforms the state-of-the-art techniques on graph OOD on several benchmarks, highlighting the effectiveness in real-world scenarios.

\end{abstract}

\begin{CCSXML}
<ccs2012>
 <concept>
  <concept_id>00000000.0000000.0000000</concept_id>
  <concept_desc>Do Not Use This Code, Generate the Correct Terms for Your Paper</concept_desc>
  <concept_significance>500</concept_significance>
 </concept>
 <concept>
  <concept_id>00000000.00000000.00000000</concept_id>
  <concept_desc>Do Not Use This Code, Generate the Correct Terms for Your Paper</concept_desc>
  <concept_significance>300</concept_significance>
 </concept>
 <concept>
  <concept_id>00000000.00000000.00000000</concept_id>
  <concept_desc>Do Not Use This Code, Generate the Correct Terms for Your Paper</concept_desc>
  <concept_significance>100</concept_significance>
 </concept>
 <concept>
  <concept_id>00000000.00000000.00000000</concept_id>
  <concept_desc>Do Not Use This Code, Generate the Correct Terms for Your Paper</concept_desc>
  <concept_significance>100</concept_significance>
 </concept>
</ccs2012>
\end{CCSXML}


\keywords{Graph Out-of-Distribution, Probability of Necessity and Sufficiency}



\maketitle

\section{Introduction}
\begin{figure*}[t!]
	\centering
	\includegraphics[width=2.1\columnwidth]{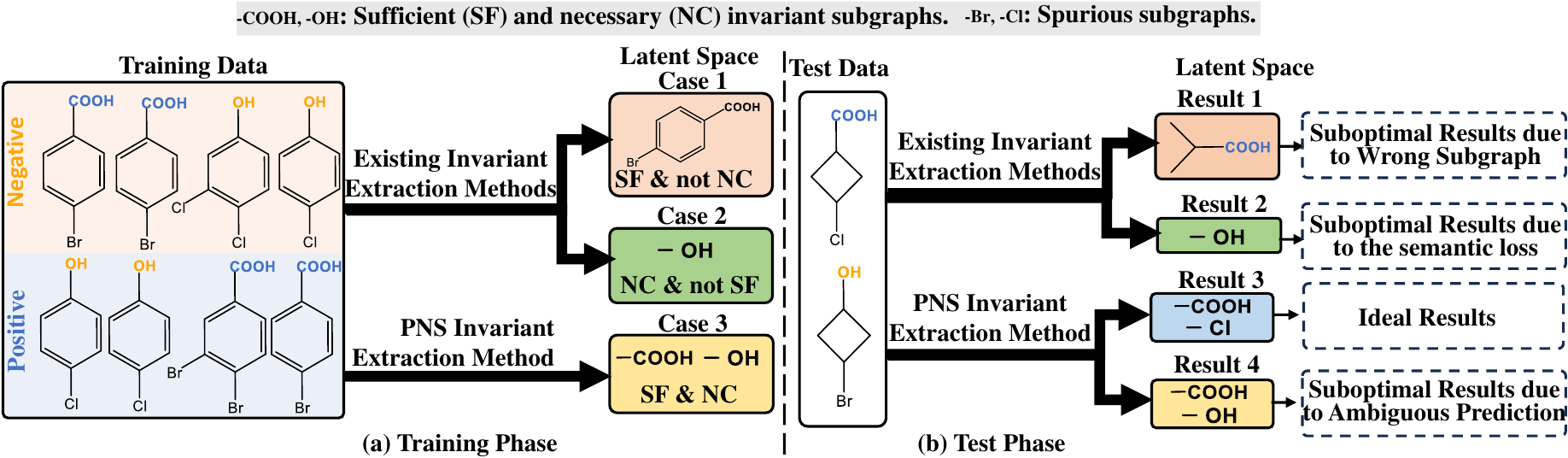} 
  \vspace*{-2.5mm}
	\caption{Illustration of graph OOD methods with invariant subgraph learning, 
 (a) In the training phase, the existing invariant extraction methods might lead to sufficient but not necessary subgraphs (pink block) and the necessary but not sufficient subgraphs (green block) according to two extreme optimization goals. The PNS invariant extraction method can extract the correct invariant subgraph. (b) In the test phase, the conventional invariant methods, that extract sufficient but not necessary latent subgraphs, might generate wrong subgraphs and further lead to suboptimal results (pink block). The methods that extract the necessary but not sufficient subgraphs might lead to the loss of semantic information (green block). When the noise-label data exists, the model might generate ambiguous predictions even if the correct subgraphs have been extracted. (yellow block). Ideal performance can be achieved by combining necessary and sufficient invariant subgraphs as well as spurious subgraphs.}
 \label{fig:motivation}
 \vspace*{-1mm}
\end{figure*}







Graph representation learning with \textbf{G}raph \textbf{N}eural \textbf{N}etworks (GNNs) have gained remarkable success in complicated problems such as 
social recommendation, intelligent transportation, etc. 
Despite their enormous success, the existing GNNs generally assume that the testing and training graph data are \textbf{i}ndependently sampled from the \textbf{i}dentical \textbf{d}istribution (I.I.D.). However, the validity of this assumption is often difficult to guarantee in real-world scenarios. 

    To solve the Out Of Distribution (OOD) challenge of graph data, one of the most popular methods \cite{li2022ood,li2022graphde,zhao2020uncertainty,liu2023flood,sui2022causal,sui2022causal} is to extract domain-invariant features for graph data. Previous, Li et al. \cite{li2022ood} address
the OOD challenge by eliminating the statistical dependence between relevant and irrelevant graph representations; Since the spurious correlations lead to the poor generalization of GNNs, Fan et.al \cite{fan2023generalizing} leverage the stable learning to extract the invariant components. Recently, several researchers have considered the environment-augmentation to extract invariant representation. 
Liu et.al \cite{liu2022graph} perform rationale-environment separation to address the graph-ood challenge; Chen et.al \cite{chen2023does} further use environment augmentation to boost the extraction of invariant features of graph data; And Li \cite{li2023identifying} employ data augmentation techniques to provide identification guarantees for the invariant latent variables. In summary, these methods aim to achieve the invariant representation by balancing two goals 1) aligning the original and augmented feature space and 2) minimizing the prediction error on training data. 







Although existing methods with environmental augmentation have achieved outstanding performance in graph OOD, they can hardly extract ideal invariant subgraphs due to the difficulty of the trade-off between invariant alignment and prediction accuracy. 
To better understand this phenomenon, we provide a toy example of molecular property classification, where the negative and positive labels are decided by special functions like $-\text{COOH}$ and $-\text{OH}$, respectively. Existing methods that balance the feature alignment restriction and the classification loss might result in two extreme cases. As shown in Case 1 of Figure \ref{fig:motivation}(a), when the models put more weight on the optimization of the classification loss, the \textit{sufficient but not necessary} latent subgraphs are extracted, i.e., the subgraph ``Benzene Ring'' is involved in the subgraph for classification in the training phase. However, the wrong invariant subgraphs are extracted in the test phase as shown in the pink block of Figure \ref{fig:motivation}(b). Since the invariant subgraphs in the test phase are different from those in the training phase, the model can hardly achieve optimal performance. As shown in Case 2 of Figure \ref{fig:motivation}(a), 
\chen{when there is an overlap between invariant subgraphs of different categories,}
the model might achieve the \textit{necessary but not sufficient} latent subgraphs, 
e.g.,
the function group ``$-$OH'', 
\chen{is shared by ``$-$COOH'' and ``$-$OH''. In this case, the model may fail to distinguish between samples containing ``$-$OH'' and those containing ``$-$COOH'',}
leading to ambiguous prediction. 
To solve this problem, the spurious subgraphs, i.e., -Br and -Cl, which are related to the semantic-relevant subgraphs, should be taken into consideration as discussed. 

Based on the examples above, an intuitive solution to the graph OOD problem is to 1) extract the \textit{sufficient and necessary} latent subgraphs and 2) employ the invariant and spurious subgraphs for prediction, which is shown in the blue block of Figure \ref{fig:motivation}(b). Under this intuition, we propose a learning framework to exploit the \textbf{P}robability of \textbf{N}ecessity and 
\textbf{S}ufficiency to extract the \textbf{I}nvariant \textbf{S}ubstructure (PNSIS). Technologically, we first employ the theory of probability of necessity and sufficiency in causality and devise a PNS-invariant subgraph extractor to extract the necessity and sufficiency invariant subgraphs for Graph OOD. Specifically, the PNS-invariant subgraph contains a sufficient subgraph extractor and a necessary subgraph extractor, where the PNS-invariant subgraphs can be extracted by optimizing the PNS upper bound. To further leverage the spurious subgraphs, the proposed PNSIS employs an ensemble train strategy with the spurious subgraphs classifier to introduce the spurious information in the test period. The proposed PNSIS is validated on several mainstream simulated and real-world benchmarks for application evaluation. The impressive performance that outperforms state-of-the-art methods demonstrates the effectiveness of our method.
\section{Related Work}
\subsection{Graph Out-of-Distrubtion. }
In this subsection, we provide an introduction to domain generalization of graph classification \cite{fan2023generalizing,yang2022learning,guo2020graseq,liu2022graph,chen2022learning,10027780}. Existing works on out-of-distribution (OOD) \cite{shen2021towards} mainly focus on the fields of computer vision \cite{zhang2021deep,zhang2022multi} and natural language processing \cite{chen2021hiddencut}, but the OOD challenge on graph-structured data receives less attention. Considering that the existing GNNs lack out-of-distribution generalization \cite{li2022graphde,zhao2020uncertainty,liu2023flood,sui2022causal,sun2022does,li2022ood,li2023identifying}, Li et. al \cite{li2021ood} proposed OOD-GNN to tackle the graph OOD (OOD) challenge by addressing the statistical dependence between relevant and irrelevant graph representations. Recognizing that spurious correlations often undermine the generalization of graph neural networks (GNN), Fan et.al propose the StableGNN \cite{fan2023generalizing}, which extracts causal representation for GNNs with the help of stable learning. Aiming to mitigate the selection bias behind graph-structured data, Wu et. al further proposes the DIR model \cite{wu2022discovering} to mine the invariant causal rationales via causal intervention. These methods essentially employ causal effect estimation to make invariant and spurious subgraphs independent. And the augmentation-based model is another type of important method. Liu et.al \cite{10.1145/3534678.3539347} employ augmentation to improve the robustness and decompose the observed graph into the environment part and the rationale part. Recently, Chen et.al \cite{chen2022learning, chen2023does} investigate the usefulness of
the augmented environment information from the theoretical perspective. And Li et. al \cite{li2023identifying} further consider a concrete scenario of graph OOD, i.e., molecular property prediction from the perspective of latent variables identification \cite{li2023subspace}. Although the aforementioned methods mitigate the distribution shift of graph data to some extent, they can not extract the invariant subgraphs with Necessity and Sufficiency \cite{yang2023invariant}. Moreover, as \cite{eastwood2023spuriosity} discussed, the spurious subgraphs also play a critical role when the data with noisy label \cite{liu2015classification,wu2024time,bai2023subclass}. In this paper, we propose the PNSIS framework, which unifies the extraction of the invariant latent subgraph with probability of necessity and sufficiency and the exploitation of spurious subgraphs via an ensemble manner. 

\subsection{Probability of Necessity and Sufficiency}

As the probability of causation, the Probability of Necessity and Sufficiency (PNS) can be used to measure the ``if and only if'' of the relationship between two events. Additionally,  the Probability of Necessity (PN) and Probability of  Sufficiency (PS) are used to evaluate the ``sufficiency cause'' and ``necessity cause'', respectively. 
Pearl \cite{pearl2000models} and Tian and Pearl \cite{tian2000probabilities} formulated precise meanings for the probabilities of causation using structural causal models. 
The issue of the identifiability of PNS initially attracted widespread attention 
\cite{galles1998axiomatic,halpern2000axiomatizing,pearl2009causality,cai2023learning,tian2000probabilities, li2019unit,li2022unit, mueller2021causes, dawid2017probability, li2022probabilities, gleiss2019quantifying, zhang2022partial, li2022bounds}.
Kuroki and Cai [24] and Tian and Pearl [56] demonstrated how to bound these quantities from data obtained in experimental and observational studies to solve this problem.
These bounds lie within the range  which the probability of causation must lie, however, it has been pointed out that these bounds are too wide to assess the probability of causation.
To overcome this difficulty, 
Pearl demonstrated that identifying the probabilities of causation requires specific functional relationships between the causes and its outcomes \cite{pearl2000models}. 
Recently, incorporating PNS into various application scenarios has also attracted much attention and currently has many applications \cite{tan2022learning, galhotra2021explaining,watson2021local,cai2022probability,mueller2021causes,beckers2021causal,shingaki2021identification}. For example, in ML explainability, CF$^2$ \cite{tan2022learning}, LEWIS \cite{galhotra2021explaining}, LENS \cite{watson2021local} and NSEG \cite{cai2022probability} use sufficiency or necessity to measure the contribution of input feature subsets to model's predictions. In the causal effect estimation problem \cite{mueller2021causes,beckers2021causal,shingaki2021identification}, it can be used to learn individual responses from population data \cite{mueller2021causes}. 
In the out-of-distribution generalization problem, CaSN employs PNS to extract domain-invariant information \cite{yang2023invariant}. 
Although CaSN is effective in extracting sufficient and necessary invariant representations, it does not capture spurious features that could improve generalization prediction. Therefore, its generalization prediction performance may be suboptimal. Furthermore, the invariant representation learned by CaSN lacks interpretability, while our PNSIS provides interpretable invariant subgraphs.



\section{Notations and Problem Formulation}
Given the training graphs $\mathcal{G}_{train} = \{(G_1, Y_1), ...,$ $(G_N, Y_N)\}$, where $G_n= (\mathcal{V}_n, \mathcal{E}_n, \mathbf{A}_n)$ represents the $n$-th graph data in the training set and $Y_n$ is the corresponding label, with the set of nodes $\mathcal{V}_n$, the set of edges $\mathcal{E}_n$ and the associated adjacency matrix $\mathbf{A}_n \in \mathbb R^{|\mathcal{V}_n| \times |\mathcal{V}_n|}$ where the element of $i$-th row and $j$-th column is 0 if the node pair ($i, j$) has no edges, otherwise it means the edge weight in pair ($i, j$). The node feature matrix of $G_n$ is represented as $\mathbf{X}_n \in \mathbb R^{|\mathcal{V}_n| \times D}$, where $i$-th row means the feature of node $i$. We use $\mathbf{B}_{i,:}$ to represent all of the $i$-th row's elements of a matrix $\mathbf{B}$.

In this paper, we investigate the problem of OOD generalization problem on graph data. We begin with the data generation process as shown in Figure \ref{fig:causal_graph22}, which is known as Partially Informative Invariant Features (PIIF) \cite{arjovsky2019invariant}. Let $G$ denote an observed graph variable, which is pointed to by $G^c$ and $G^s$, which means that $G^c$ and $G^s$ control the generation of the observed graphs. 
Let $E$ denote the environment indices and $E \to G^s$ means that the spurious subgraph in each environment may be different. Remarkably, we find the label $Y$ is the mediator between $G^c$ and $G^s$, which means that $G^c$ and $G^s$ is independent given $Y$.

Based on the aforementioned generation process, the task is to learn a GNN model $h_{\varphi}(\cdot)$ with parameter $\varphi$ to accurately predict the label of the testing graphs $\mathcal{G}_{test}$, where the distribution $\Psi(\mathcal{G}_{train}) \ne \Psi(\mathcal{G}_{test})$. 
Note that the test distribution is unknown in the OOD setting. Moreover, we assume the existence of $M$ environments $\{\mathcal{T}_1, \mathcal{T}_2,$ $..., \mathcal{T}_M\}$ in training graphs $\mathcal{G}_{train}$. For each environment $\mathcal{T}_i$, data are drawn from a distinct distribution, represented as $p_{\mathbf{A}, \mathbf{X}, Y|E_{\mathcal{T}_i}}$, where the variables $\mathbf{A}$, $\mathbf{X}$, $Y$ correspond to adjacency matrices, node feature matrices and labels, respectively.

\begin{figure}[t]
\centering
\includegraphics[width=0.15\textwidth]{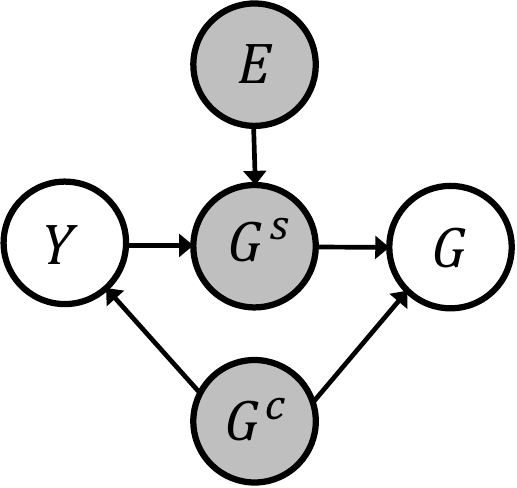}
\caption{PIIF SCM\cite{arjovsky2019invariant}. Inside this graph, the noises are omitted for brevity, where the grey and white nodes denote the latent and observed variables, respectively.}
\vspace{-3mm}
\label{fig:causal_graph22}
\end{figure}

\section{Invariant Subgraph Learning via PNS Upper Bound Optimization}
This section describes a theory for extracting sufficient and necessary invariant subgraphs based on PNS. 
Specifically, we first introduce the basic model used to build our invariant learning theory. This basic model is a graph generalization model that is jointly predicted by two different invariant subgraph extractors which we call sufficient subgraph extractor and necessary subgraph extractor. 
Secondly, we reformulate the invariant subgraph learning as a trade-off between the sufficient subgraph extractor and the necessary subgraph extractor. 
Thirdly, we designed a PNS upper bound that defined on any two different environments for optimization, to enable these two subgraph  extractors to produce the necessary and sufficient invariant subgraph. 


\subsection{Reformulation of Invariant Subgraph Learning via Sufficiency and Necessity}
We begin by introducing the basic model that is jointly predicted by two different invariant subgraph extractors and its procedure is broken down into the following three phases. First, feed the same input graph data ($G_n$, $Y_n$) into two invariant subgraph extractors $f(\cdot; \Theta^{sf})$ and $f(\cdot; \Theta^{nc})$ with parameters given by $\Theta^{sf}$ and $\Theta^{nc}$.  $f(\cdot; \Phi^{sf})$ is designed to extract sufficient invariant subgraphs and $f(\cdot; \Phi^{nc})$ is to extract necessary invariant subgraphs, which are called sufficient subgraph extractor and necessary subgraph extractor respectively in this paper (specific details on how to design these two extractors are given in the following sections). Second, feed the output graph from $f(\cdot; \Theta^{sf})$ and $f(\cdot; \Theta^{nc})$ into two different GNNs $g(\cdot;\Phi^{sf})$ and $g(\cdot;\Phi^{nc})$ for classification with parameters given by $\Phi^{sf}$ and $\Phi^{nc}$. Third, average the outputs of the two classifiers and serve as the final prediction. In summary, we formalize the invariant subgraph learning model as follows.
\begin{equation}\label{equ:model}
    \hat{Y}_n = 0.5 \cdot g(f(\mathbf{A}_n, \mathbf{X}_n; \Theta^{sf}); \Phi^{sf}) + 0.5 \cdot g(f(\mathbf{A}_n, \mathbf{X}_n; \Theta^{nc}), \Phi^{nc}), 
\end{equation}



When the outputs of these two subgraph extractors tend to be the same, i.e., their outputs are both sufficient and necessary invariant subgraphs, the prediction accuracy of the model is high. The reasons are as follows. If the sufficient subgraph extractor $f(\cdot;\Theta^{sf})$ captures sufficient but unnecessary invariant subgraph, the prediction may be incorrect since the testing graph may not contain this subgraph. 
If the necessary subgraph extractor $f(\cdot;\Theta^{nc})$ captures the necessary but insufficient invariant subgraph, the prediction may be also incorrect since other classes of the testing graph may also contain this subgraph. 
In other words, when the outputs of these two extractors are highly inconsistent, the model's prediction accuracy will be low. 
Therefore, the invariant subgraph learning problem is transformed into a trade-off between these two subgraph extractors. 
In the next section, we describe how to design and optimize this trade-off.


\subsection{PNS Upper Bound for Invariant Subgraph Extraction}
Probability of Necessity and Sufficiency (PNS) \cite{pearl2022probabilities} is to describe the probability that event $A$ occurs if and only if event $B$ occurs. This probability
operates in two events to compute the probability that event $A$ is necessary and sufficient cause for event $B$. 
In this section, we describe a PNS upper bound which is defined on any two different environments for optimization to ensure that the output of the sufficient or necessary subgraph extractor is the necessary and sufficient cause (subgraph format) of the label of the given input graph. To achieve this goal, we first define the PNS risk in a single environment, and then extend its definition to any two environments through upper bound derivation. 
\subsubsection{PNS Risk.} 
Let event $A$ denote $G^c = f(\mathbf{A}_n, \mathbf{X}_n; \Theta^{sf})$ and event $B$ denote  $g(f(\mathbf{A}_n, \mathbf{X}_n; \Theta^{sf}); \Phi^{sf}) = y_n$. 
Therefore the occurrence of events $A$ and $B$ means that the prediction of an invariant subgraph generated from $f(\mathbf{A}_n, \mathbf{X}_n;\Phi^{sf})$ is the same as the label $y_n$.
Further, we let $\bar A$ denote $G^c = f(\mathbf{A}_n, \mathbf{X}_n; \Theta^{nc})$ and event $\bar B$ denote  $g(f(\mathbf{A}_n, \mathbf{X}_n; \Theta^{nc}); \Phi^{nc}) \ne y_n$. Thus the occurrence of events $\bar A$ and $\bar B$ means that the prediction of an invariant subgraph generated from $f(\mathbf{A}_n, \mathbf{X}_n;\Phi^{nc})$ is different from the label $y_n$. Moreover, the definition of PNS for these events is as follows \cite{pearl2022probabilities}.
\begin{definition} (Probability of necessity, PN)\label{equ:ddpn}
\begin{equation}
    PN = P(\bar{B}_{\bar A}|A, B),
\end{equation}
PN is the probability that, given that events $A$ and $B$ both occur initially, event $B$ does not occur after event $A$ is changed from occurring to not occurring.
\end{definition}
\begin{definition}\label{equ:ddps}
(Probability of sufficiency, PS)
\begin{equation}
    PS = P({B}_{A}|\bar A, \bar B)
\end{equation}
PS is the probability that, given that events $A$ and $B$ both did not occur initially, event $B$ occurs after event $A$ is changed from not occurring to occurring.
\end{definition}
\begin{definition} (Probability of necessity and sufficiency, PNS)
    \begin{equation}
    \label{equ:pns_obj}
        PNS = PN \cdot P(A, B) + PS \cdot P(\bar A, \bar B)
    \end{equation}
PNS is the sum of PN and PS, each multiplied by the probability of its corresponding condition. PNS measures the probability that event $A$ is a necessary and sufficient cause for event $B$.
\end{definition}

However, the exact computation of PNS is not tractable in most cases since PN and PS require counterfactual reasoning, and the counterfactual dates are usually difficult to obtain in the real world. 
We adopt the suggestion proposed by Yang \cite{yang2023invariant} and calculate PNS risk based on the observed data in the following way. Let $P(G^c |\mathbf{A}_n, \mathbf{X}_n; \Theta_i)$ be a multivariate Bernoulli distribution for invariant subgraph variable $G^c$ parameterized by the output of $f(\mathbf{A}_n, \mathbf{X}_n; \Theta_i)$.
PNS risk which is defined on environment $E_{\mathcal{T}_i}$, is defined as follows.
\begin{definition} (PNS risk)
The following PNS risk is defined over the distribution of an environment $E_{\mathcal{T}_i}$.
\begin{equation}\label{equ:estimated_pns}
\begin{aligned}
    r^{\mathcal{T}_i}_{ns}(\Theta^{sf}, &\Phi^{sf}, \Theta^{nc}, \Phi^{nc}) :=\mathbb E_{(\mathbf{A}_n, \mathbf{X}_n, y_n) \sim p_{\mathbf{A}, \mathbf{X}, Y|E_{\mathcal{T}_i}}}\big[\\
    &\mathbb{E}_{G_j^c \sim P(G^c |\mathbf{A}_n, \mathbf{X}_n; \Theta^{sf})}\mathbb{I}[g(G_j^c; \Phi^{sf}) \ne  y_n]\\
    &+ \mathbb{E}_{G_j^c \sim P(G^c |\mathbf{A}_n, \mathbf{X}_n; \Theta^{nc})}\mathbb{I}[g(G_j^c; \Phi^{nc}) =  y_n]\big],
\end{aligned}
\end{equation}
\end{definition}
PNS risk measures the negative probability that the invariant subgraphs generated by $f(\cdot; \Theta^{sf})$ and $f(\cdot; \Theta^{nc})$ are both the sufficient and necessary cause of $y_n$, respectively. To generate the sufficient and necessary invariant subgraph, we extend $r^{\mathcal{T}_i}_{ns}(\cdot)$ in Eq.~\ref{equ:estimated_pns} from $E_{\mathcal{T}_i}$ to any two environments in the following section.

\subsubsection{PNS Upper Bound.}
In this section, we propose a graph structure distance to derive an upper bound of the PNS risk.
Let $\mathbf{C}_n \in \mathbb R^{|\mathcal{V}| \times |\mathcal{V}| \times (D+2)}$ be the collection of features of $G_n$, where $\mathbf{C}_{n_{:,:, i}}$ ($i \leq D$) denotes the diagonal matrix with diagonal entries being the feature vector of node $i$, $\mathbf{C}_{n_{:,:, D+1}}$ and  $\mathbf{C}_{n_{:,:, D+2}}$ represent the adjacency matrix and the euclidean distance matrix between each node feature respectively. Let PMP denote Power-sum Multi-symmetric Polynomials and we use the following permutation invariant function \cite{maron2019provably} which is as powerful as the $3$-WL graph isomorphism test, to encode the structure information of a graph data into a vector.
\begin{equation}\label{equ:fwl}
    \mathbf{h}_n  =  \sum_{(i_1, i_2) \in [V_n]^2}\mathbf{C}_{n_{i_1, i_2, :}} \big\| \text{PMP}(\mmset{ (\mathbf{C}_{n_{k, i_2, :}}, \mathbf{C}_{n_{i_1, k, :}}) \big\vert k \in [V_n]}),
\end{equation}
where $\mmset$ represents a multiset, $V_n$ is the number of nodes of the graph $G_n$, $[V_n] = \{1,2,..., V_n\}$ and $[V_n]^2$ denote the Cartesian product of $[V_n]$ with itself. Please see Appendix \ref{sec:pmp} for more details about this graph structure representation function. 
After this, let $p_{\mathbf{h}|E_{\mathcal{T}_i}}$$(i=1,..., M)$ denote the distribution of graph structure representation in Eq.~\ref{equ:fwl} on different environment data, respectively. 
\begin{figure*}[t]
	\centering
	\includegraphics[width=2.\columnwidth]{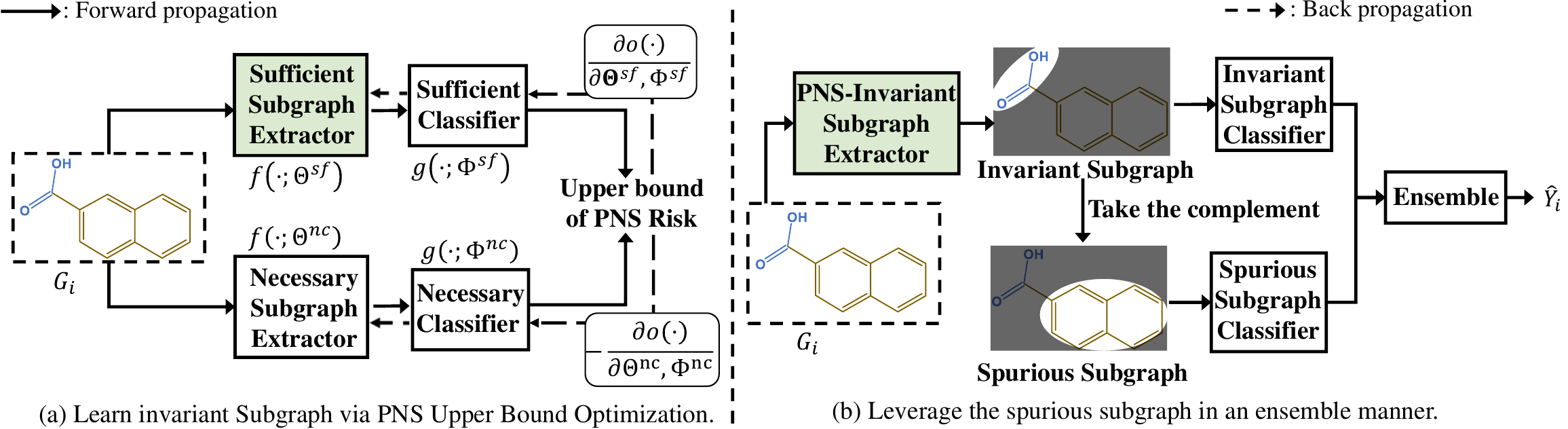} 
	\caption{The illustration of the PNSIS framework.  (a) The left side of the figure denotes the subgraph extractors that are used to extract the sufficient and necessary invariant subgraphs by optimizing the upper bound. (b) The right side of the figure denotes the ensemble inference phase, which includes an invariant subgraph classifier and a spurious subgraph classifier
 }
 \label{fig:model}
\end{figure*}
We define the graph structure distance as follows.
\begin{definition}\label{def:distance} (Graph Structure Distance, GSD).
The structure distance between environments $E_{\mathcal{T}_i}$ and $E_{\mathcal{T}_j}$ can be formalized as follows:
\begin{equation}\label{equ:distance1}
\begin{aligned}
    d_{sd}^{\mathcal{T}_i \leftrightarrow \mathcal{T}_j}(\mathcal{T}_i, &\mathcal{T}_j) =dist(p_{\mathbf h|E_{\mathcal{T}_i}}, p_{\mathbf h|E_{\mathcal{T}_j}})\\
    &+\mathbb{E}_{\mathbf{X}_i \sim p_{\mathbf X|E_{\mathcal{T}_i}}}\big[\mathbb E_{\mathbf{X}_j \sim p_{\mathbf X|E_{\mathcal{T}_j}}} [\|\mathbf{X}_i - \mathbf{X}_j\|_2]\big]
\end{aligned}
\end{equation}
\end{definition}
Note that $dist(\cdot, \cdot)$ in Eq.~\ref{equ:distance1} can be any distance metric for distribution and we employ the total variation distance in this paper. Hence, our GSD is affected by structure and semantic (node features) information. Moreover, our GSD satisfies the three axioms for a general metric.
\begin{theorem}\label{thm:distance}
We make the following assumption:
\begin{itemize}
    \item A1. For any two graphs $G_i$, $G_j$ from different environments, $G_i$ and $G_j$ can always be distinguished by the 3-WL graph isomorphism test.
\end{itemize}
Graph Structure Distance (GSD) satisfies the three axioms for a general metric, to be specific, it satisfies the following conditions:
\begin{enumerate}
    \item[1)] $d_{sd}^{\mathcal{T}_i \leftrightarrow \mathcal{T}_j}$$\geq$$0$ and $d_{sd}^{\mathcal{T}_i \leftrightarrow \mathcal{T}_j}$$=$$0$ if and only if $p_{\mathbf{A},\mathbf{X},Y, |E_{\mathcal{T}_i}}$$=$$ p_{\mathbf{A},\mathbf{X},Y, |E_{\mathcal{T}_j}}$;
    \item[2)] $d_{sd}^{\mathcal{T}_i \leftrightarrow \mathcal{T}_j}\left(\mathcal{T}_i, \mathcal{T}_j\right)=d_{sd}^{\mathcal{T}_j \leftrightarrow \mathcal{T}_i}\left(\mathcal{T}_j, \mathcal{T}_i\right)$ (symmetric);
    \item[3)] $d_{sd}^{\mathcal{T}_i \leftrightarrow \mathcal{T}_j} \leq d_{sd}^{\mathcal{T}_i \leftrightarrow \mathcal{T}_k}
    +d_{S D}^{\mathcal{T}_k \leftrightarrow \mathcal{T}_j}$ (triangle inequality).
\end{enumerate}
\end{theorem}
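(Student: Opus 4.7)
The plan is to treat $d_{sd}^{\mathcal{T}_i \leftrightarrow \mathcal{T}_j}$ as the sum of two pieces---a total variation distance between the graph-encoding distributions $p_{\mathbf{h}|E_{\mathcal{T}_i}}, p_{\mathbf{h}|E_{\mathcal{T}_j}}$ and the double expectation of the Euclidean distance between feature matrices drawn from the two environments---and verify each of the three metric axioms term by term before combining by additivity. The total variation piece satisfies all three axioms by classical properties of TV, so most of the real work lies in checking the feature-expectation piece and in handling the ``only if'' direction of axiom~1 via assumption~A1.

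\textbf{Axioms 1 (non-negativity) and 2 (symmetry).} Non-negativity is immediate: $dist(\cdot,\cdot)$ is a metric (and hence non-negative), and $\|\mathbf{X}_i - \mathbf{X}_j\|_2 \geq 0$ pointwise so its iterated expectation is non-negative. Symmetry is equally direct: total variation is symmetric in its two arguments, and $\|\mathbf{X}_i - \mathbf{X}_j\|_2 = \|\mathbf{X}_j - \mathbf{X}_i\|_2$ together with Fubini allows the order of the two independent expectations in the feature term to be swapped.

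\textbf{Axiom 3 (triangle inequality).} For the total variation piece I would simply invoke the standard TV triangle inequality. For the feature-expectation piece I would introduce an auxiliary $\mathbf{X}_k \sim p_{\mathbf{X}|E_{\mathcal{T}_k}}$ independent of $\mathbf{X}_i$ and $\mathbf{X}_j$, apply the pointwise norm inequality $\|\mathbf{X}_i - \mathbf{X}_j\|_2 \leq \|\mathbf{X}_i - \mathbf{X}_k\|_2 + \|\mathbf{X}_k - \mathbf{X}_j\|_2$ under the product measure $p_{\mathbf{X}|E_{\mathcal{T}_i}} \otimes p_{\mathbf{X}|E_{\mathcal{T}_k}} \otimes p_{\mathbf{X}|E_{\mathcal{T}_j}}$, and then use linearity of expectation and Fubini to split the triple expectation into the two required double expectations. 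Summing the two term-wise inequalities yields the full triangle inequality $d_{sd}^{\mathcal{T}_i \leftrightarrow \mathcal{T}_j} \leq d_{sd}^{\mathcal{T}_i \leftrightarrow \mathcal{T}_k} + d_{sd}^{\mathcal{T}_k \leftrightarrow \mathcal{T}_j}$.

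\textbf{Main obstacle.} The delicate step is the ``only if'' direction of axiom~1: showing that $d_{sd}^{\mathcal{T}_i \leftrightarrow \mathcal{T}_j} = 0$ forces $p_{\mathbf{A},\mathbf{X},Y|E_{\mathcal{T}_i}} = p_{\mathbf{A},\mathbf{X},Y|E_{\mathcal{T}_j}}$. This is where assumption~A1 does the work. By construction, $\mathbf{h}$ packs the adjacency matrix, node features, and pairwise distances into the tensor $\mathbf{C}_n$ and then applies a PMP-based permutation-invariant function that is as expressive as the 3-WL isomorphism test (Eq.~\ref{equ:fwl}). Under A1, any two graphs from distinct environments are 3-WL-distinguishable, so the map $G \mapsto \mathbf{h}$ is injective on the union of the environment supports; consequently the first summand vanishing transports distributional equality from $\mathbf{h}$ back to $(\mathbf{A}, \mathbf{X})$. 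The label component $Y$ is then absorbed through its deterministic dependence on the underlying invariant and spurious subgraphs in the PIIF SCM of Figure~\ref{fig:causal_graph22}. The subtle part to articulate carefully is that the second summand vanishing additionally forces both feature marginals to be degenerate at a common point, and this must be made compatible with the first-summand conclusion; I expect reconciling this degeneracy (or implicitly restricting to a setting where it is benign) to be the most delicate part of the write-up.
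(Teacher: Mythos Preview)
Your approach matches the paper's proof essentially line for line on the parts the paper actually carries out: both treat $d_{sd}$ as the sum of a total-variation piece and a feature-expectation piece, verify non-negativity term by term, use Fubini for symmetry of the double expectation, and prove the triangle inequality by invoking the TV triangle inequality for the first piece and an auxiliary independent $\mathbf{X}_k \sim p_{\mathbf{X}|E_{\mathcal{T}_k}}$ together with the pointwise norm inequality for the second.

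Where you diverge is in attempting the ``if and only if'' clause of axiom~1. The paper's proof in Appendix~C simply does not address it: the appendix establishes $d_{sd}\geq 0$ and then moves on to symmetry, never returning to the identity of indiscernibles in either direction and never invoking assumption~A1. So your ``main obstacle'' paragraph is work the authors did not do. Your instinct that A1 plus 3-WL expressiveness is what would recover $(\mathbf{A},\mathbf{X})$ from $\mathbf{h}$ is the natural route, and your worry about the second summand is well founded---indeed it cuts both ways: even when $p_{\mathbf{A},\mathbf{X},Y|E_{\mathcal{T}_i}}=p_{\mathbf{A},\mathbf{X},Y|E_{\mathcal{T}_j}}$, the term $\mathbb{E}_{\mathbf{X}_i}\mathbb{E}_{\mathbf{X}_j}[\|\mathbf{X}_i-\mathbf{X}_j\|_2]$ with independent draws need not vanish unless the feature marginal is degenerate, so the ``if'' direction is as fragile as the ``only if''. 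The paper sidesteps this entirely; you should be aware that a fully rigorous treatment of axiom~1 would require either restricting to degenerate feature marginals or weakening the statement, neither of which the paper does.
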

The detailed proof can be found in Appendix \ref{sec:distance}.
Next, we provide the upper bound for PNS risk in Eq.~\ref{equ:estimated_pns} through Theorem \ref{thm:bound}.
\begin{theorem}\label{thm:bound}
\textbf{(Generalization Bound)}
    We make the following assumption:
    \begin{itemize}
        \item A2: For two distinct environment distributions $p_{\mathbf{A},\mathbf{X},Y, |E_{\mathcal{T}_i}}$ and $p_{\mathbf{A},\mathbf{X},Y, |E_{\mathcal{T}_i}}$,        
        assume a positive value $K$ exists that satisfies the following inequality:
\begin{equation}
\begin{aligned}
&|p_{\mathbf{A},\mathbf{X},Y|E_{\mathcal{T}_i}}-p_{\mathbf{A},\mathbf{X},Y|E_{\mathcal{T}_j}}|\\
 \leq &K\cdot\big(|
 p_{\mathbf{h} |E_{\mathcal{T}_i}}
 -
p_{\mathbf{h} |E_{\mathcal{T}_j}}
 |+\mathbb{E}_{\mathbf{X}_i \sim p_{\mathbf{X}|E_{\mathcal{T}_i}}}\big[\mathbb E_{\mathbf{X}_j \sim p_{\mathbf{X} |E_{\mathcal{T}_j}}} [\|\mathbf{X}_i - \mathbf{X}_j\|_2]\big]\big) \\
= &K \cdot d_{sd}^{\mathcal{T}_i \leftrightarrow \mathcal{T}_j}\left(\mathcal{T}_i, \mathcal{T}_j\right).
\end{aligned}
\end{equation}
    \end{itemize}
Based on the aforementioned definition and assumption, we
propose the generalization bound for PNS risk in Eq.~\ref{equ:estimated_pns}.
\begin{equation}\label{equ:bound}
\begin{aligned}
    &r^{\mathcal{T}_i}_{ns}(\Theta^{sf}, \Phi^{sf}, \Theta^{nc}, \Phi^{nc})
    \leq  r^{\mathcal{T}_j}_{ns}(\Theta^{sf}, \Phi^{sf}, \Theta^{nc}, \Phi^{nc}) \\&+ K\cdot \big(|p_{\mathbf{h}|E_{\mathcal{T}_i}}-p_{\mathbf{h}|E_{\mathcal{T}_j}}|
    +\mathbb{E}_{\mathbf{X}_i \sim p_{\mathbf{X}|E_{\mathcal{T}_i}}}\big[\mathbb E_{\mathbf{X}_j \sim p_{\mathbf{X}|E_{\mathcal{T}_j}}} [\|\mathbf{X}_i - \mathbf{X}_j\|_2]\big]\big) \\
    = &r^{\mathcal{T}_j}_{ns}(\Theta^{sf}, \Phi^{sf}, \Theta^{nc}, \Phi^{nc}) + K \cdot d_{sd}^{\mathcal{T}_i \leftrightarrow \mathcal{T}_j}\left(\mathcal{T}_i, \mathcal{T}_j\right) + \lambda,   
\end{aligned}
\end{equation}
where $K$, $\lambda$ are constants.
\end{theorem}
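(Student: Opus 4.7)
The plan is to derive the bound by showing that the gap between the PNS risks on the two environments is controlled by the $L^{1}$/total variation distance between their joint distributions, and then to invoke Assumption~A2 to translate that distance into the graph structure distance $d_{sd}^{\mathcal{T}_i \leftrightarrow \mathcal{T}_j}$.

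First, I would introduce the shorthand $\ell(\mathbf{A}_n,\mathbf{X}_n,y_n;\Theta^{sf},\Phi^{sf},\Theta^{nc},\Phi^{nc})$ for the bracketed integrand of Eq.~\ref{equ:estimated_pns}, so that $r^{\mathcal{T}_i}_{ns}=\mathbb{E}_{(\mathbf{A}_n,\mathbf{X}_n,y_n)\sim p_{\mathbf{A},\mathbf{X},Y|E_{\mathcal{T}_i}}}[\ell]$ (and analogously for $\mathcal{T}_j$). Since $\ell$ is a sum of two expectations of $\{0,1\}$-valued indicator functions over Bernoulli draws of $G^{c}$, it is uniformly bounded by a constant $C\le 2$. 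This uniform boundedness is precisely what lets us convert a density-level perturbation of the environment into a controlled perturbation of the risk.

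Second, I would write the difference of risks against a common integrand and apply the triangle inequality,
\[
r^{\mathcal{T}_i}_{ns}-r^{\mathcal{T}_j}_{ns}=\int \ell\cdot\bigl(p_{\mathbf{A},\mathbf{X},Y|E_{\mathcal{T}_i}}-p_{\mathbf{A},\mathbf{X},Y|E_{\mathcal{T}_j}}\bigr)\,d(\mathbf{A},\mathbf{X},Y)\le C\int\bigl|p_{\mathbf{A},\mathbf{X},Y|E_{\mathcal{T}_i}}-p_{\mathbf{A},\mathbf{X},Y|E_{\mathcal{T}_j}}\bigr|,
\]
and then invoke Assumption~A2 to replace the integrated pointwise difference of the joints by $K\cdot d_{sd}^{\mathcal{T}_i \leftrightarrow \mathcal{T}_j}(\mathcal{T}_i,\mathcal{T}_j)$, where the RHS of A2 is exactly the definition of $d_{sd}$ in Eq.~\ref{equ:distance1}. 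Rolling the constant $C$ into $K$ (or an additive slack $\lambda$) yields the displayed bound in Eq.~\ref{equ:bound}.

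The main obstacle I expect is reconciling the notation in A2: as stated it reads like a pointwise inequality in $(\mathbf{A},\mathbf{X},Y)$, yet its RHS depends only on the two environments. I would make explicit the reading that A2 controls the integrated (total variation-type) distance between the two joint distributions by $K\cdot d_{sd}$, and confirm that the PMP-based encoding $\mathbf{h}$ used inside $d_{sd}$ is a valid summary of the structural factor thanks to Theorem~\ref{thm:distance} and Assumption~A1 (so $d_{sd}$ is a genuine metric and the triangle-style manipulations above are legitimate). A secondary, more mechanical point is to verify that the additive $\lambda$ can be taken as a universal constant independent of $(\Theta^{sf},\Phi^{sf},\Theta^{nc},\Phi^{nc})$; this follows from the fact that the only source of slack in the chain is the uniform bound on $\ell$, which depends on the cardinality of the label set and the number of Bernoulli samples, but not on the learnable parameters.
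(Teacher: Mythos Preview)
Your argument is correct: since the integrand $\ell$ in Eq.~\ref{equ:estimated_pns} is uniformly bounded (by $2$), the difference $r^{\mathcal{T}_i}_{ns}-r^{\mathcal{T}_j}_{ns}$ is controlled by the $L^{1}$ distance between the two joints $p_{\mathbf{A},\mathbf{X},Y|E_{\mathcal{T}_i}}$ and $p_{\mathbf{A},\mathbf{X},Y|E_{\mathcal{T}_j}}$, and invoking A2 in its integrated/total-variation reading immediately gives Eq.~\ref{equ:bound}. You also correctly flag the ambiguity in the statement of A2.

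However, your route is not the one the paper takes. The paper follows the classical domain-adaptation decomposition of Ben-David et al.: it introduces environment-specific labeling functions $\eta^{\mathcal{T}_i},\eta^{\mathcal{T}_j}$, writes the cross-term $r^{\mathcal{T}_j}(G,\eta^{\mathcal{T}_i})$, and splits the gap into (i) a term bounded by the \emph{marginal} distance $\int|p_{\mathbf{A},\mathbf{X}|E_{\mathcal{T}_i}}-p_{\mathbf{A},\mathbf{X}|E_{\mathcal{T}_j}}|$, which is then controlled via A2, and (ii) a residual $\lambda$ that is \emph{not} mere slack from bounding $\ell$ but the labeling-function mismatch $\mathbb{E}[\eta^{\mathcal{T}_i}(G)-\eta^{\mathcal{T}_j}(G)]$ between environments. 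So in the paper $\lambda$ carries a specific meaning (the ``ideal joint risk'' term familiar from domain adaptation), whereas in your argument $\lambda$ is effectively zero once the constant $C\le 2$ is absorbed into $K$. Your approach is shorter and matches A2 as literally stated (on the joint $p_{\mathbf{A},\mathbf{X},Y}$); the paper's buys an interpretable $\lambda$ at the cost of an extra add-and-subtract step and of applying A2 to the $(\mathbf{A},\mathbf{X})$-marginals rather than the joints. Your remark that $\lambda$ is ``the uniform bound on $\ell$'' is therefore not how the paper arrives at it, and you should note the discrepancy if you present your argument as a reconstruction of theirs.
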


\chen{The detailed proof can be found in Appendix \ref{sec:bound}.} 
Finally, combining Eqs.~\ref{equ:estimated_pns},~\ref{equ:bound}, the final objective function $o(\cdot)$ is as follows:
\begin{equation}\label{equ:final_obj}
\begin{aligned}
&\min o(\Theta^{sf}, \Theta^{nc}, \Phi^{sf}, \Phi^{nc}) =\mathbb E_{(\mathbf{A}_n, \mathbf{X}_n, y_n) \sim p_{\mathbf{A}, \mathbf{X}, Y|E_{\mathcal{T}_i}}}\big[\\
    &\mathbb{E}_{G_j^c \sim P(G^c |\mathbf{A}_n, \mathbf{X}_n; \Theta^{sf})}\mathbb{I}[g(G_j^c; \Phi^{sf}) \ne  y_n]\\
    &+ \mathbb{E}_{G_j^c \sim P(G^c |\mathbf{A}_n, \mathbf{X}_n; \Theta^{nc})}\mathbb{I}[g(G_j^c; \Phi^{nc}) =  y_n]\big]\\
    &+ K\cdot \big(|p_{\mathbf{h}|E_{\mathcal{T}_i}}-p_{\mathbf{h}|E_{\mathcal{T}_j}}|
    +\mathbb{E}_{\mathbf{X}_i \sim p_{\mathbf{X}|E_{\mathcal{T}_i}}}\big[\mathbb E_{\mathbf{X}_j \sim p_{\mathbf{X}|E_{\mathcal{T}_j}}} [\|\mathbf{X}_i - \mathbf{X}_j\|_2]\big]\big).
\end{aligned}
\end{equation}
When the objective function in Eq.~\ref{equ:final_obj} gradually converges,
the output subgraphs of $f(\cdot; \Theta^{sf})$ and $f(\cdot; \Theta^{nc})$ are both necessary and sufficient 
invariant subgraphs. In practice, we take $f(\cdot; \Theta^{sf})$ and its classifier $g(\cdot;\Phi^{sf})$ as PNS-invariant subgraph extractor and invariant subgraph classifier, respectively.
%
The next section provides a detailed implementation of Eq.~\ref{equ:final_obj} and further incorporates spurious features for prediction.


\section{Invariant Subgraph Learning Model Incorporating Spurious Subgraph}
Building on the theory of invariant subgraph learning via PNS, we exploit the Probability of Necessity and Sufficiency to extract the Invariant Substructure (PNSIS). Beyond that, we further leverage the spurious subgraph to boost the generalization performance in an ensemble manner. The overall framework of PNSIS is shown in Figure \ref{fig:model}. Specifically, the procedure of our PNSIS is broken down into the following two phases. 
In the first phase, PNSIS adopts a stochastic gradient descent algorithm to train sufficient and necessary subgraph extractors to obtain sufficient and necessary invariant subgraphs, as shown in Figure \ref{fig:model} (a).
In the second phase, using the trained sufficient subgraph extractor and its classifier as a PNS-invariant subgraph extractor and invariant subgraph classifier, respectively, PNSIS trains a spurious subgraph classifier and integrates the outputs of both classifiers into the final prediction, as shown in Figure \ref{fig:model} (b). 
More details on the above two phases are given in the following sections.
\subsection{Model Implementation and Optimization}

In this section, we first provide a concrete implementation of the underlying model in Eq.~\ref{equ:model} used to learn sufficient and necessary invariant subgraphs. Second, using Monte Carlo methods to estimate the loss of the model w.r.t the objective function in Eq.~\ref{equ:final_obj}, and optimize the parameters of the invariant subgraph extractors and classifiers by backpropagation.
\subsubsection{Model Implementation.} Technically, we implement the 
invariant subgraph extractors $f(\mathbf{A}_n, \mathbf{X}_n; \Phi^{sf})$ and $f(\mathbf{A}_n, \mathbf{X}_n; \Phi^{nc})$, and the subgraph classifiers $g(\cdot; \Phi^{sf})$ and $g(\cdot; \Phi^{nc})$
in the following three steps. 
First, use two graph convolution networks (GCNs) $\text{GCN}(\mathbf{G};\Theta^{sf})$, $\text{GCN}(\mathbf{G};\Theta^{nc})$  to generate node embeddings $\mathbf{Z}$ and $\mathbf{Z}'$, respectively. 
Second, 
taking the inner product of $\mathbf{Z}$ and $\mathbf{Z}'$ as the output of $f(\cdot;\Theta^{sf})$ and $f(\cdot;\Theta^{nc})$, respectively,  where each entry $(i, j)$ in the inner product represents the probability of the edge existence for the node pair $(i, j)$. 
Third, use another two GCNs $\text{GCN}(\cdot;\Phi^{sf})$ and $\text{GCN}(\cdot;\Phi^{sf})$ as classifiers, respectively. To summarize, the three steps can be formalized as follows.
\begin{equation}\label{equ:process}
\begin{small}
\begin{aligned}
    &f(\mathbf{A}_n, \mathbf{X}_n;\Theta^{sf}) := \sigma(\mathbf{Z}\mathbf{Z}^\top), \quad\mathbf{Z} = \text{GCN}(\mathbf{A}_n, \mathbf{X}_n; \Theta^{sf}),\\
    &f(\mathbf{A}_n, \mathbf{X}_n; \Theta^{nc}) := \sigma(\mathbf{Z}'\mathbf{Z}'^\top),\quad\mathbf{Z}' = \text{GCN}(\mathbf{A}_n, \mathbf{X}_n; \Theta^{nc}),\\
    &g(f(\cdot; \Theta^{sf}); \Phi^{sf}) := \sigma(\text{GCN}(f(\cdot;\Theta^{sf}); \Phi^{sf})),\\
    &g(f(\cdot; \Theta^{nc}); \Phi^{nc}) := \sigma(\text{GCN}(f(\cdot;\Theta^{nc}); \Phi^{nc})),
\end{aligned}
\end{small}
\end{equation}
where $\sigma(\cdot)$ is the Sigmoid function.
\subsubsection{PNS Upper Bound Estimation.} 
The objective function in Eq.~\ref{equ:final_obj} is estimated as follows. 
Given a graph dataset $\mathcal{G}_{train}$ and randomly drawn two subsets $\mathcal{G}'_{train}$ and $\mathcal{G}''_{train}$ from it, which can be regarded as an approximation of a certain two environments $\mathcal{T}_i$ and $\mathcal{T}_j$ in the training set. First, the last two terms of Eq.~\ref{equ:final_obj} can be estimated by $\mathcal{G}'_{train}$ and $\mathcal{G}''_{train}$. Specifically,
\begin{equation}\label{equ:est_p}
    |p_{\mathbf{h}|E_{\mathcal{T}_j}} - p_{\mathbf{h}|E_{\mathcal{T}_j}}| \approx  C \cdot \sum^{|\mathcal{G}'_{train}|}_{i=1} \sum^{|\mathcal{G}''_{train}|}_{j=1} \|\mathbf{h}_i - \mathbf{h}_j\|_1,
\end{equation}
\begin{equation}\label{equ:est_e}
    \mathbb{E}_{\mathbf{X}_i \sim p_{\mathbf{X}|E_{\mathcal{T}_i}}}\big[\mathbb E_{\mathbf{X}_j \sim p_{\mathbf{X}|E_{\mathcal{T}_j}}} [\|\mathbf{X}_i - \mathbf{X}_j\|_2]\big] \approx C \cdot \sum^{|\mathcal{G}'_{train}|}_{i=1} \sum^{|\mathcal{G}''_{train}|}_{j=1} \|\mathbf{X}_i - \mathbf{X}_j\|_2,
\end{equation}
where $C = 1 / (|\mathcal{G}'_{train}|\cdot|\mathcal{G}''_{train}|)$, $\mathbf{X}_i$ and $\mathbf{X}_j$ are node feature matrices associated with $G_i \in \mathcal{G}'_{train}$ and $G_j \in \mathcal{G}''_{train}$ respectively.
Secondly, to sample $G^c$ from $P(G^c |\mathbf{A}_n, \mathbf{X}_n; \Theta^{sf})$ and $P(G^c |\mathbf{A}_n, \mathbf{X}_n; \Theta^{sf})$, we employ Gumbel-Softmax \cite{jang2016categorical} and denote the sampling results as $\mathcal{G}_{sf}$ and $\mathcal{G}_{nc}$, respectively. The estimation of Eq.~\ref{equ:final_obj} is as follows.
\begin{equation}\label{equ:s}
\begin{aligned}
    \frac{1}{|\mathcal{G}_{train}|} \sum^{|\mathcal{G}_{train}|}_{i=1} &\bigg( \frac{1}{|\mathcal{G}_{sf}|} \sum^{|\mathcal{G}_{sf}|}_{j=1} \mathbb{I}[g(G_j^c; \Phi^{sf}) \ne  y_i]\\
     &+\frac{1}{|\mathcal{G}_{nc}|} \sum^{|\mathcal{G}_{nc}|}_{k=1} \mathbb{I}[g(G_k^c; \Phi^{nc}) =  y_i]\bigg),
\end{aligned}
\end{equation}
where $\mathbb{I}[g(G_j^c; \Phi^{sf}) $$\ne$$  y_i]$, $\mathbb{I}[g(G_j^c; \Phi^{sf}) $$\ne$$  y_i]$ can be replaced by other differentiable losses, e.g., cross-entropy loss. Combining Eqs.~\ref{equ:est_p}-\ref{equ:s}, the estimation of PNS upper bound in Eq.~\ref{equ:final_obj} is obtained. 
Finally, PNSIS adopts the SGD algorithm to optimize the estimated PNS upper bound w.r.t. $\Phi^{sf}$, $\Phi^{nc}$, $\Theta^{sf}$ and $\Theta^{nc}$. Due to the opposite training objectives of necessity and sufficiency, this optimization process can be seen as invariant subgraph adversarial learning.



\subsection{Fuse Invariant and Spuriousity
for Generalization}\label{sec:sp}
To enhance the model generalization, we suggest not only making predictions based on their invariant subgraphs but also taking into account their spurious subgraph in the test set. We accomplish this goal in two steps. 1) Build a model to classify testing graphs based on their spurious subgraphs. 2) Ensemble predictions with trained invariant subgraph classifiers. 
The relevant theories of these steps have been proposed by Eastwood~\cite{eastwood2023spuriosity}. 

\subsubsection{Spurious Subgraph Classifier Optimization}
Create a pseudo-labelled dataset $\mathcal{G}_{pl} = \{(G^c_n, G^s_n, \hat{Y}_n)\}^N_{n=1}$, where we define a spurious subgraph $G_n^s$ as the complement of invariant subgraph $G_n^c$ and $\hat{Y}_n$ is the prediction of invariant classifier given $G^c_n$. Train a graph network classifier $g(\mathbf{A}_n, \mathbf{X}_n; \Phi^{sp})$ on this dataset, which we refer to as the spurious subgraph classifier. 
\subsubsection{Ensemble Prediction}
Given  $\mathcal{G}_{pl}=$$\{(G^c_n, G^s_n,\hat{Y}_n)\}^N_{n=1}$,  trained spurious classifier $g(\cdot;\Phi^{sp})$ and invariant classifier $g(\cdot;\Phi^{sf})$, to simplify the expression, we assume that the OOD task is binary graph classification,  and let  $\epsilon_0 = \sum_{\hat{Y}_n \in \mathcal{G}_{pl}} \hat{Y}_n$, $\epsilon_1 = \frac{1}{N-\epsilon_0} \sum_{n=1}^N(1-\hat{Y}_n)(1-g(G_n^c;\Phi^{sp}))$, $\epsilon_2 = \frac{1}{\epsilon_0} \sum_{n=1}^N \hat{Y}_n g(G^{c}_n; \Phi^{sf})$, $\epsilon_3 = \operatorname{logit}(p(Y=1))$. The ensemble results of the spurious subgraph classifier $g(\cdot; \Phi^{sp})$ and the invariant subgraph classifier $g(\cdot; \Phi^{sf})$ are computed as follows.
\begin{equation}\label{equ:debias}    \hat{Y}=\sigma(\operatorname{logit}(g(G_n^c;\Phi^{sf}))) + \operatorname{logit}\Big(\frac{g(G^s_n;\Phi^{sp}) + \epsilon_0-1}{\epsilon_0 + \epsilon_1-1}\Big) - \epsilon_3.
\end{equation}
%
Starting from the leftmost side of Eq.~\ref{equ:debias}, the first three terms can be viewed as $\log p(Y | G^c, G^s)$, $\log p(Y|G^c)$ and $\log p (Y | G^s)$. 
Thus the key idea of ensemble prediction \cite{eastwood2023spuriosity} is to decompose $P(Y|G^c, G^s)$ into separately estimatable terms.
Note that this decomposition holds true if, given $Y$, $G^c$ and $G^s$ are independent of each other, and the PIIF causal diagram we used in Figure \ref{fig:causal_graph22} satisfies this condition. See Appendix \ref{sec:ensemble} for details.
\section{Experiments}
We evaluate the effectiveness of the proposed PNSIS model on both synthetic and real-world datasets by answering the following questions. \textbf{Q1:} Can the proposed PNSIS model outperforms current state-of-the-art methods under invariant subgraph extraction? \textbf{Q2:} Can the proposed PNS invariant extractor with PNS upper bound restriction well learn the invariant latent subgraphs? \textbf{Q3:} Can the ensemble strategy with spurious subgraphs, which leverages the spurious subgraphs, benefit the model performance?

\begin{table*}[ht]
\centering
\caption{OOD generalization performance on the structure and mixed shifts for synthetic graphs.}
\renewcommand{\arraystretch}{0.50}
\label{tab:ogb_cls1}
\resizebox{\textwidth}{!}{
\begin{tabular}{c|ccc|ccc}
\toprule
    & \multicolumn{3}{c|}{SPMOTIF-STRUC}   
    & \multicolumn{3}{c}{SPMOTIF-MIXED}        \\ \midrule
\textbf{Model}    & BIAS=0.2       & BIAS=0.5  & BIAS=0.8     & BIAS=0.2       & BIAS=0.5           & BIAS=0.8\\   
    \midrule
    \textbf{ERM}        & 0.5807(0.0225) & 0.5998(0.0187) & 0.5692(0.0211) & 0.5808(0.0198) & 0.5725(0.0234) & 0.5252(0.0163) \\
    \textbf{IRM}        & 0.5683(0.0196) & 0.5847(0.0203) & 0.5327(0.0165) & 0.5950(0.0212)  & 0.5745(0.0186) & 0.5488(0.0174) \\
    \textbf{VREx}       & 0.4343(0.0192) & 0.3620(0.0179)  & 0.3858(0.0186) & 0.4453(0.0168) & 0.4737(0.0175) & 0.3810(0.0162)  \\
    \textbf{GroupDRO}   & 0.5695(0.0171) & 0.5782(0.0186) & 0.5488(0.0197) & 0.5800(0.0164)   & 0.5782(0.0172) & 0.5003(0.0189) \\
    \textbf{Coral}      & 0.6167(0.0187) & 0.6032(0.0174) & 0.5288(0.0169) & 0.5735(0.0182) & 0.5767(0.0191) & 0.5292(0.0177) \\
    \textbf{DANN}       & 0.5767(0.0201) & 0.5785(0.0189) & 0.5635(0.0222) & 0.5682(0.0213) & 0.5793(0.0194) & 0.5358(0.0178) \\
    \textbf{Mixup}      & 0.5062(0.0176) & 0.5233(0.0191) & 0.4965(0.0184) & 0.5148(0.0198) & 0.5153(0.0167) & 0.4988(0.0173) \\
    \textbf{DIR}        & 0.6237(0.0218) & 0.5972(0.0197) & 0.5255(0.0176) & 0.6435(0.0188) & 0.6382(0.0193) & 0.4295(0.0169) \\
    \textbf{GSAT}       & 0.4673(0.0183) & 0.4208(0.0215) & 0.4152(0.0197) & 0.3690(0.0172)  & 0.4223(0.0168) & 0.3688(0.0191) \\
    \textbf{CIGA}       & 0.601(0.0178)  & 0.5530(0.0221)  & 0.5663(0.0182) & 0.4982(0.0214) & 0.5368(0.0209) & 0.5197(0.0187) \\
    \textbf{CIGAv1}     & 0.5567(0.0195) & 0.5047(0.0172) & 0.5463(0.0183) & 0.5585(0.0197) & 0.5400(0.0173)   & 0.5435(0.0181) \\
    \textbf{CIGAv2}     & 0.5698(0.0181) & 0.5082(0.0164) & 0.5122(0.0179) & 0.5218(0.0185) & 0.4930(0.0188)  & 0.5180(0.0194)  \\
    \textbf{GALA}       & 0.5976(0.0189) & 0.5894(0.0167) & 0.5861(0.0178) & 0.5833(0.0196) & 0.5796(0.0182) & 0.5535(0.0175) \\
    
    \midrule
    \textbf{PNSIS} &
  \textbf{0.8113(0.0013)} &
  \textbf{0.7777(0.0045)} &
  \textbf{0.7633(0.0121)} &
  \textbf{0.8128(0.0662)} &
  \textbf{0.8000(0.0302)} &
  \textbf{0.7890(0.0025)} \\ 
    \midrule
    \textbf{ORACLE(IID)} & \multicolumn{3}{c|}{88.70(0.1700)} & \multicolumn{3}{c}{88.73(0.2500)}\\
    \bottomrule
    \end{tabular}%
  \label{tab:addlabel}%
}
\end{table*}%

\begin{table*}[ht]
\renewcommand{\arraystretch}{0.50}
\centering
\caption{The ROC-AUC results on seven molecular property classification tasks of the OGB dataset. The values presented are averaged over four replicates with different random seeds. Values in the parenthesis denote the standard errors.}
\label{tab:ogb_cls2}
\resizebox{1.\textwidth}{!}{
\begin{tabular}{@{}c|ccccccc@{}}
\toprule
\textbf{Model}           & Molhiv         & Molbace        & Molbbbp        & Molclintox    & Moltox21       & Molsider       & Moltoxcast     \\ \midrule
\textbf{GCN}             & 0.7580(0.0197)  & 0.7689(0.0323) & 0.6974(0.0153) & 0.9027(0.0134) & 0.7456(0.0035) & 0.5843(0.0034) & 0.6421(0.0069) \\
\textbf{GAT}             & 0.7652(0.0069) & 0.8124(0.0140) & 0.6864(0.0298) & 0.8798(0.0011) & 0.7492(0.0066) & 0.5956(0.0102) & 0.6466(0.0028) \\
\textbf{GraphSAGE}       & 0.7747(0.0115) & 0.7425(0.0248) & 0.6805(0.0126) & 0.8877(0.0066) & 0.7410(0.0035) & 0.6059(0.0016) & 0.6282(0.0067) \\
\textbf{GIN}             & 0.7852(0.0158) & 0.7638(0.0387) & 0.6748(0.0063) & 0.9155(0.0212) & 0.7440(0.0040) & 0.5817(0.0124) & 0.6342(0.0102) \\
\textbf{GIN0}            & 0.7814(0.0121) & 0.7584(0.0239) & 0.6611(0.0094) & 0.9212(0.0255) & 0.7490(0.0015) & 0.5968(0.0148) & 0.6289(0.0019) \\
\textbf{SGC}             & 0.6342(0.0016) & 0.6875(0.0021) & 0.6613(0.0039) & 0.8536(0.0028) & 0.7222(0.0005) & 0.5906(0.0032) & 0.6283(0.0010)  \\
\textbf{JKNet}           & 0.7534(0.0123) & 0.7425(0.0291) & 0.6930(0.0075)  & 0.8558(0.0217) & 0.7418(0.0029) & 0.5818(0.0159) & 0.6357(0.0055) \\
\textbf{DIFFPOOL}        & 0.6408(0.0497) & 0.7525(0.0116) & 0.6935(0.0189) & 0.8241(0.0167) & 0.7325(0.0084) & 0.5758(0.0151) & 0.6217(0.0054) \\
\textbf{CMPNN}           & 0.7711(0.0071) & 0.7215(0.0490)  & 0.6403(0.0172) & 0.7947(0.0461) & 0.7048(0.0107) & 0.5799(0.0080)  & 0.6394(0.0105) \\
\textbf{DIR}             & 0.7672(0.0084) & 0.7834(0.0145) & 0.6467(0.0174) & 0.8129(0.0307) & 0.6966(0.0286) & 0.5794(0.0111) & 0.6196(0.0135) \\
\textbf{StableGNN}   & 0.7779(0.0119) & 07695(0.0327)  & 0.6882(0.0387) & 0.8798(0.0237) & 0.7312(0.0034) & 0.5915(0.0117) & 0.6329(0.0069) \\
\textbf{AttentiveFP}     & 0.7780(0.0195)  & 0.7767(0.0026)  & 0.6555(0.0128) & 0.8335(0.0216) & 0.7934(0.0028) & 
0.6919(0.0148) & 0.7678(0.0037) \\
\textbf{OOD-GNN}     & 0.7950(0.0080)  & 0.8130(0.0120)  & 0.7010(0.0100) & 0.9140(0.0130) & 0.7840(0.0800) & 0.6400(0.0130) & 0.7870(0.0030) \\
\textbf{GIL}             & 0.7908(0.0054) & / & / & / & / & 0.6350(0.0057) & / \\
\textbf{GREA}             & 0.7932(0.0092) & 0.8237(0.0237) & 0.6970(0.0128) & 0.8789(0.0368) & 0.7723(0.0119) & 0.6014(0.0204) & 0.6732(0.0092) \\
\textbf{GALA}        & 0.7788(0.0044)  & 0.7893(0.0037) & 0.6557(0.0335)  & 0.8737(0.0189) & 0.7360(0.0053) & 0.5894(0.0051) & 0.6297(0.0235) \\ \midrule
\textbf{PNSIS} &
  \textbf{0.7953(0.0119)} &
  \textbf{0.8314(0.0181)} &
  \textbf{0.7040(0.0123)} &
  \textbf{0.9323(0.0042)} &
  \textbf{0.7966(0.0023)} &
  \textbf{0.7278(0.0120)} &
  \textbf{0.7900(0.0006)} \\ \bottomrule
\end{tabular}
}
\end{table*}

\subsection{Setup}
\subsubsection{Dataset Description}
We take the graph generalization in the graph classification task into account and consider synthetic and real-world graph datasets with different distribution shifts to evaluate the performance of PNSIS. 

\textbf{SPMotif datasets.} For the simulation dataset, we consider the SPMotif datasets introduced in DIR \cite{wu2022discovering}, where artificial structural shifts and graph size shifts are nested (SPMotif-Struc). To generate different levels of domain shift, we employ the same simulation method in \cite{chen2022learning} the bias based on fully informative invariant feature (FIIF), where the motif and one of the three base graphs (Tree, Ladder, Wheel) are artificially
(spuriously) correlated with a probability of various biases. Furthermore, we also construct a more challenging simulation dataset named SPMotif-Mixed like\cite{chen2022learning}, where the node features are spuriously correlated with a probability of different biases by a fixed number of the corresponding labels.


\textbf{Real-world datasets.}
To evaluate the proposed PNSIS model in real-world scenarios, we further consider the following real-world datasets. First, we consider seven datasets from the OGBG \cite{hu2020open} benchmark, which is a collection of realistic and large-scale molecular data. Additionally, we also consider the graph out-of-distribution (GOOD) benchmark dataset \cite{gui2022good}, which is a systematic benchmark for graph out-of-distribution problems. We select four datasets with different split strategies. 

All graphs in these datasets are pre-processed using
RDKit \cite{landrum2006rdkit}. During preprocessing, these data sets employ a scaffold strategy in the OGBG data set and scaffold/size splitting in the GOOD data set to split molecules based on their two-dimensional structural framework.  This preprocessing procedure will inevitably introduce spurious correlations between functional groups due to the selection
bias of the training set. Please refer to the Appendix \ref{sec:SoD} for the detailed description of the statistics of the dataset.

\subsubsection{Baselines}
We compare the proposed PNSIS method with two three kinds of baselines. Besides the methods devised for out-of-distribution like ERM \cite{vedantam2021empirical}, IRM \cite{arjovsky2019invariant}, VREx \cite{krueger2021out}, GroupDRO \cite{sagawa2019distributionally}, Coral \cite{sun2016deep}, DANN \cite{lempitsky2016domain} and Mixup \cite{zhang2017mixup}, we also consider the conventional methods based on graph neural networks (GNNs) like GCN \cite{kipf2016semi}, GAT \cite{velivckovic2017graph}, GraphSage \cite{hamilton2017inductive}, GIN \cite{xu2018powerful}, SGC \cite{wu2019simplifying}, JKNet \cite{xu2018representation}, DIFFPOOL \cite{ying2018hierarchical}, CMPNN \cite{10.5555/3491440.3491832}, AttentiveFP \cite{xiong2019pushing}, and GSAT \cite{miao2022interpretable}. Then, we further consider the methods that leverage the technique of causal inference like DIR \cite{wu2022discovering}, StableGNN \cite{fan2023generalizing}, CIGA \cite{chen2022learning}, OOD-GNN \cite{li2022ood}. Finally, we consider methods like GALA \cite{chen2023does}, GIL \cite{li2022learning}, and GREA \cite{10.1145/3534678.3539347}, which harness environment augmentation or environment inference to improve generalization. 
We use ADAM optimizer in all experiments and report the mean accuracy as evaluation metrics. All experiments are implemented by Pytorch on a single NVIDIA RTX A5000 24GB GPU.


\begin{table*}[ht]
\centering
\renewcommand{\arraystretch}{0.90}
\caption{The AUC-ROC results on four prediction tasks of the GOOD dataset. The values presented are averaged over four replicates with different random seeds. Values in the parenthesis denote the standard errors.}
\label{tab:ogb_cls3}
\resizebox{1.\textwidth}{!}{
\begin{tabular}{c|cc|cc|c|ccc}
\toprule
        & \multicolumn{2}{c|}{HIV}   
        & \multicolumn{2}{c|}{Motif}
        & \multicolumn{1}{c|}{SST2}
        & \multicolumn{3}{c}{DrugOOD}
        
        \\ \midrule
\textbf{Model}    & scaffold       & size           & basis          & size           & length  & assay &scaffold &size       \\ \midrule
\textbf{ERM}      & 0.6955(0.0239) & 0.5919(0.0229) & 0.6380(0.1036) & 0.5346(0.0408) & 0.8052(0.0113) & 0.7211(0.0123) & 0.6852(0.0145) & 0.6465(0.0167)\\
\textbf{IRM}      & 0.7017(0.0278) & 0.5994(0.0159) & 0.5993(11.46)  & 0.5368(0.0411) & 0.8075(0.0117) & 0.7076(0.0118) & 0.6664(0.0132) & 0.6688(0.0146)  \\
\textbf{VREx}     & 0.6934(0.0354) & 0.5849(0.0228) & 0.6653(0.0404) & 0.5447(0.0342) & 0.8020(0.0139)& 0.6805(0.0138) & 0.6668(0.0150) & 0.6768(0.0163) \\
\textbf{GroupDRO} & 0.6815(0.0284) & 0.5775(0.0286) & 0.6196(0.0827) & 0.5169(0.0222) & 0.8167(0.0045)&0.7177(0.0122) & 0.6897(0.0134) & 0.6635(0.0146) \\
\textbf{Coral}    & 0.7069(0.0225) & 0.5939(0.0290) & 0.6623(0.0901) & 0.5371(0.0275) & 0.7894(0.0122)& 0.7181(0.0121) & 0.6966(0.0133) & 0.6552(0.0145) \\
\textbf{DANN}     & 0.6943(0.0242) & 0.6238(0.0265) & 0.5154(0.0728) & 0.5186(0.0244) & 0.8053(0.0140)& 0.7133(0.0124) & 0.6592(0.0136) & 0.6627(0.0148) \\
\textbf{Mixup}    & 0.7065(0.0186) & 0.5911(0.0311) & 0.6967(0.0586) & 0.5131(0.0256) & 0.8077(0.0103)& \textbf{0.7244(0.0112)} & 0.6970(0.0124) & 0.6648(0.0136) \\
\textbf{DIR}      & 0.6844(0.0251) & 0.5767(0.0375) & 0.3999(0.0550) & 0.4483(0.0400) & 0.8155(0.0106)& 0.6901(0.0143) & 0.6420(0.0156) & 0.6154(0.0169) \\
\textbf{GSAT}     & 0.7007(0.0176) & 0.6073(0.0239) & 0.5513(0.0541) & 0.6076(0.0594) & 0.8149(0.0076) & 0.6978(0.0137) & 0.6450(0.0151) & 0.6092(0.0165)\\
\textbf{CIGAv1}   & 0.6940(0.0239) & 0.6181(0.0168) & 0.6643(0.1131) & 0.4914(0.0834) & 0.8044(0.0124)& 0.7089(0.0126) & 0.6570(0.0138) & 0.6382(0.0152) \\
\textbf{CIGAv2}   & 0.6940(0.0197) & 0.5955(0.0256) & 0.6715(0.0819) & 0.5442(0.0311) & 0.8046(0.0200)&0.6989(0.0135) & 0.6695(0.0147) & 0.6410(0.0160) \\
\textbf{GALA}     & 0.6864(0.0225) & 0.5948(0.0138) & 0.6041(0.015)  & 0.5257(0.0082) & 0.7672(0.0136)& 0.7101(0.0125) & 0.6637(0.0137) & 0.6410(0.0159)\\ \midrule
\textbf{PNSIS} &
  \textbf{0.7167(0.0079)} &
  \textbf{0.6276(0.0042)} &
  \textbf{0.7748(0.0221)} &
  \textbf{0.6326(0.0602)} &
  \textbf{0.8196(0.0020)} &
  \textbf{0.7240(0.0113)}&
  \textbf{0.6991(0.0021)}&
  \textbf{0.6701(0.0171)}
  \\ \bottomrule
\end{tabular}
}
\end{table*}

    

\subsubsection{Experiment Results on Simulation Datasets.} The experiment results on the SPMotif simulation datasets are shown in Table 1. Based on the results of the experiment, we can find that the proposed PNSIS method outperforms the other baselines with a large margin in different biases on the standard SPMotif-Struc dataset and the more challenging SPMotif-Mixed dataset. Specifically, the proposed PNISIS achieves more than $25\%$ averaged improvement on all the simulation datasets, indirectly reflecting that our method can extract the invariant subgraphs with the property of necessity and sufficiency. What is more, we also find that the performance drops with increasing biases, showing that overheavy bias can still influence generalization. Moreover, by comparing the variance of different methods, we can find that the variance of the baselines is large, this is because these methods generate the invariant subgraph by trading off two objects, which might lead to unstable results. In the meanwhile, the variance of our method is much smaller, reflecting the stability of our method.


\subsubsection{Experiment Results on Real-world Datasets.} The experiment results on the OGBG and GOOD datasets are shown in Table 2 and 3. According to the experiment results, we can draw the following conclusions: 1) the proposed PNSIS outperforms all other baselines on all the datasets, which is attributed to both the PNS restriction for PNS invariant subgraphs and the ensemble training strategy with the help of the spurious subgraphs. 2) Some GNN-based methods such as GCN and GIN do not achieve the ideal performance, reflecting that these methods have limited generalization. 3) The causality-based baselines also achieve comparable performance and the methods based on environmental data-augmentation achieve the closest results, reflecting the usefulness of the environment augmentation. However, since these methods can hardly extract the necessary and sufficient invariant subgraphs, so some experiment results of these methods like Molclintox, Molsider, and Moltoxcast can hardly achieve ideal performance. 
\begin{figure}[t]
    \centering
\includegraphics[width=0.9\columnwidth]{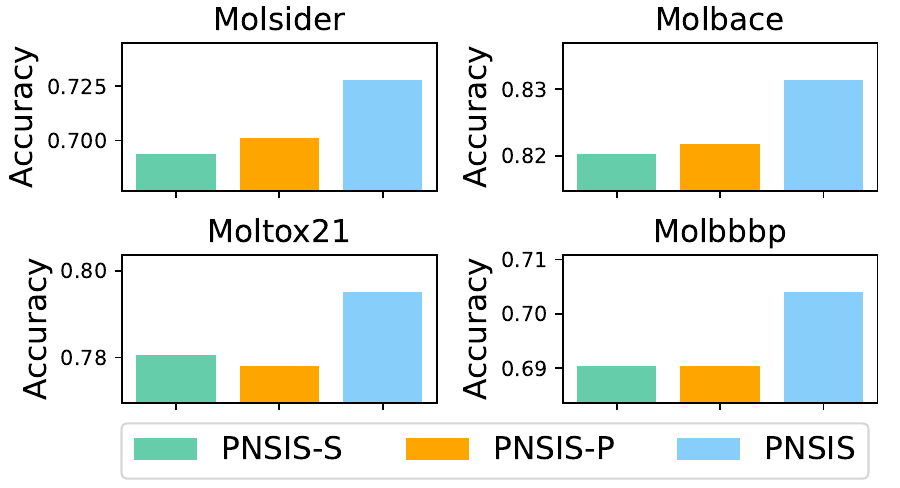}
\vspace{-2mm}
    \caption{Ablation study on the Molsider, Molbace, Moltox21, and Molbbbp datasets in the OGBG benchmark. We explore the impact of different components in the PNSIS method.\textbf{}}
\vspace{-5mm}
    \label{fig:ablation}
\end{figure}

\subsection{Ablation Study}
To answer Q2 and Q3 to show if the proposed PNS invariant extractor with PNS upper bound restriction and the ensemble strategy with spurious subgraphs benefit the generalization performance, we also devise the two model variants. 1) \textbf{PNSIS-S}: we remove the PNS upper bound restriction from the standard PNSIS. 2) \textbf{PNSIS-P} we remove the ensemble strategy from the standard PNSIS. 
Experiment results on four datasets in the OBGB benchmark are shown in Figure \ref{fig:ablation}. According to the experiment results, we can find that both the PNS-upper bound restriction and the ensemble strategy play an important role in generalization performance, illustrating the effectiveness of the components in the proposed frameworks.

\subsection{Visualization}
We further provide visualization of the invariant latent subgraphs extracted by the proposed PNSIS, which is shown in Figure \ref{fig:vis}. According to the experimental results, we can find that our method can capture the unique atoms of the molecules, i.e. the nitrogen atom and the phosphorus atom, which play a significant role in the molecular property, making it possible to provide inspiration and explanations in the field of chemistry.
\begin{figure}
    \centering
\includegraphics[width=0.95\columnwidth]{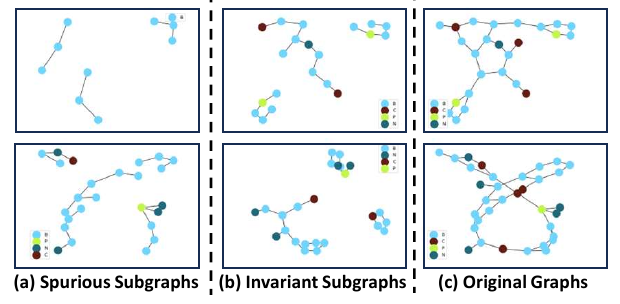}
\vspace{-3mm}
    \caption{Visualization of molecule examples in the OGBG benchmark. Modes with different colors denote different atoms, and edges denote different chemical bonds.}
    \vspace{-3mm}
    \label{fig:vis}
\end{figure}

\section{Conclusion}
This paper presents a unified framework for graph out-of-distribution, which leverages probability of necessity and sufficiency for invariant subgraph learning and involve spuriousity subgraph for ensemble inference. Based on the conventional data generation process, we prove that the necessity and sufficiency invariant subgraphs can be learned by optimizing the proposed upper bound. By unifying the invariant subgraph extractor and the ensemble inference in the test phase, the proposed PNSIS framework shows outstanding experiment results on the simulation and real-world datasets, highlighting its effectiveness. In summary, this paper takes a meaningful step for the combination of causality and graph representation learning.


\clearpage

\bibliographystyle{ACM-Reference-Format}
\bibliography{citation}

\clearpage
\appendix

\section{Background of Power-sum Multi-symmetric Polynomials}\label{sec:pmp}
Given a graph $G_n=(\mathcal{V}_n, \mathcal{E}_n, \mathbf{A}_n)$, use the following permutation-invariant function to get its graph representation $\mathbf{h}_n$. This function adopts \textbf{P}ower-sum \textbf{M}ulti-symmetric \textbf{P}olynomials (PMP) to encode multiset, which is as powerful as the $2$-FWL (equivalent to $3$-WL) graph isomorphism test, as follows.
\begin{equation}\label{equ:fwl2}
    \mathbf{h}_n  =  \sum_{(i_1, i_2) \in [V_n]^2}\mathbf{C}_{n_{i_1, i_2, :}} \big\| \text{PMP}(\mmset{ (\mathbf{C}_{n_{k, i_2, :}}, \mathbf{C}_{n_{i_1, k, :}}) \big\vert k \in [V_n]}),
\end{equation}
where $\mathbf{C}_{n_{i_1, i_2, :}} \in \mathbb R^D$ is the feature (or color) of $k$-tuple $\boldsymbol{i}=(i_1, i_2)$.
\begin{equation}
     \text{PMP}(\mmset{ (\mathbf{C}_{n_{k, i_2, :}}, \mathbf{C}_{n_{i_1, k, :}}) \big\vert k \in [V_n]}) = \big\|_{j=1}^{D'} \mathbf{z}^j_{(i_1, i_2)},
\end{equation}
\begin{equation}
    \mathbf{z}^j_{(i_1, i_2)} = \sum_{k \in [V_n]} \tau_1(\mathbf{C}_n)_{k, i_2, j} \odot  \tau_2(\mathbf{C}_n)_{i_1, k, j},
\end{equation}
where $D' = \left(\begin{smallmatrix}N+2D\\ 2D\end{smallmatrix}\right)$, $\tau_1, \tau_2: \mathbb R^{D} \to \mathbb R^{D\cdot D'}$ are two polynomial maps, 
$\tau_1(\mathbf{C}_n) = \big\|_{j=1}^{D'} \mathbf{C}_n^{\mathbf{a}_{j}}$, 
$\tau_2(\mathbf{C}_n) = \big\|_{j=1}^{D'} \mathbf{C}_n^{\mathbf{b}_{j}}$, $\mathbf{a}_j, \mathbf{b}_j \in [N]^{D}$, $|\mathbf{a}_j| + |\mathbf{b}_j| \le N$, $|\mathbf{a}_j| = \sum^{D}_{p=1} a_{jp}$, $[N]=\{1, 2, ..., N\}$.

\section{Formula Derivation for Ensemble Prediction}\label{sec:ensemble}
\begin{equation}
   \begin{aligned}
p(Y \mid G^c, G^s) & \propto p(G^c, G^s \mid Y) p(Y) \\
& =p(G^c \mid Y) p(G^s \mid Y) p(Y) \\
& \propto \frac{p(Y \mid G^c) p(Y \mid G^s)}{p(Y)}
\end{aligned} 
\end{equation}

The premise for the second line to hold is that given $Y$, then $G^c$, and $G^s$ are independent of each other.

\section{Proof of Theorem \ref{thm:distance}}\label{sec:distance}
\begin{proof}
In Definition \ref{def:distance}, our paper presents the Graph Structure Distance between the source and target domains as shown in  Eq.~\ref{equ:distance1}. Furthermore, we adopt the total variation distance as the distance metric for the distribution metric. For the graph structure distributions of environments $E_{\mathcal{T}_i}$ and $E_{\mathcal{T}_j}$, the Graph Structure Distance can be expressed as follows:
\begin{equation}
    \label{equ:TV}
    \begin{aligned}
    &dist(p_{\mathbf h|E_{\mathcal{T}_i}}, p_{\mathbf h|E_{\mathcal{T}_j}})\\
    &=TotalVariation(p_{\mathbf h|E_{\mathcal{T}_i}}, p_{\mathbf h|E_{\mathcal{T}_j}})\\
    &=\frac{1}{2} \sum |p_{\mathbf h|E_{\mathcal{T}_i}} - p_{\mathbf h|E_{\mathcal{T}_j}}|
    \end{aligned}
\end{equation}

(1) Graph Structure Distance satisfies the first axiom: the distance metric function satisfies non-negativity.\\
From Eq.\ref{equ:TV}, it is easy to see that $|p_{\mathbf h|E_{\mathcal{T}_i}} - p_{\mathbf h|E_{\mathcal{T}_j}}|$ is always greater than or equal to 0.
And since $\|\mathbf{X}^i - \mathbf{X}^j\|_2$ is always greater than 0, it is proved that the graph structure distance satisfies non-negativity.

(2) Graph Structure Distance satisfies the second axiom: The distance metric function satisfies the commutative property.\\
From Eq.~\ref{equ:TV} it can be deduced that $dist(p_{\mathbf h|E_{\mathcal{T}_i}}, p_{\mathbf h|E_{\mathcal{T}_j}})$ can be expressed as follows:
\begin{equation}\label{equ:secondaxiom}
    \begin{aligned}
    &dist(p_{\mathbf h|E_{\mathcal{T}_i}}, p_{\mathbf h|E_{\mathcal{T}_j}}) \\
    &=TotalVariation(p_{\mathbf h|E_{\mathcal{T}_i}}, p_{\mathbf h|E_{\mathcal{T}_j}}) \\&=
    \frac{1}{2} \sum |p_{\mathbf h|E_{\mathcal{T}_i}} - p_{\mathbf h|E_{\mathcal{T}_j}}|\\
    &=\frac{1}{2} \sum |p_{\mathbf h|E_{\mathcal{T}_j}} - p_{\mathbf h|E_{\mathcal{T}_i}}|\\
    &=TotalVariation(p_{\mathbf h|E_{\mathcal{T}_j}}, p_{\mathbf h|E_{\mathcal{T}_i}})\\
    &=dist(p_{\mathbf h|E_{\mathcal{T}_j}}, p_{\mathbf h|E_{\mathcal{T}_i}})
    \end{aligned}
\end{equation}
For $\mathbb{E}_{\mathbf{X}_i \sim p_{\mathbf X|E_{\mathcal{T}_i}}}\big[\mathbb E_{\mathbf{X}_j \sim p_{\mathbf X|E_{\mathcal{T}_j}}} [\|\mathbf{X}_i - \mathbf{X}_j\|_2]\big]$, we extend it as follows.
\begin{equation}
    \label{equ:E1}
    \begin{aligned}
    &\mathbb E_{\mathbf{X}_i \sim p_{\mathbf X|E_{\mathcal{T}_i}}}\big[\mathbb E_{\mathbf{X}_j \sim p_{\mathbf X|E_{\mathcal{T}_j}}} [\|\mathbf{X}_i - \mathbf{X}_j\|_2]\big]\\
    &=\int \left( \int \|\mathbf{X
    }_i - \mathbf{X}_j\|_2 \, dp_{\mathbf X|E_{\mathcal{T}_j}}(\mathbf{X}_j) \right) \, dp_{\mathbf X|E_{\mathcal{T}_i}}(\mathbf{X}_i)
    \end{aligned}
\end{equation}
Next, we use Fubini's theorem, which allows us to swap the order of integrals, provided the product function is non-negative. Here $\|\mathbf{X}_i - \mathbf{X}_j\|_2$ is non-negative.
Therefore, Eq.~\ref{equ:E} is supplemented as follows.
\begin{equation}
    \label{equ:E}
    \begin{aligned}
    &\mathbb E_{\mathbf{X}_i \sim p_{\mathbf X|E_{\mathcal{T}_i}}}\big[\mathbb E_{\mathbf{X}_j \sim p_{\mathbf X|E_{\mathcal{T}_j}}} [\|\mathbf{X}_i - \mathbf{X}_j\|_2]\big]\\
    &=\int \left( \int \|\mathbf{X
    }_i - \mathbf{X}_j\|_2 \, dp_{\mathbf X|E_{\mathcal{T}_j}}(\mathbf{X}_j) \right) \, dp_{\mathbf X|E_{\mathcal{T}_i}}(\mathbf{X}_i)\\
    &=\int \left( \int \|\mathbf{X
    }_i - \mathbf{X}_j\|_2 \, dp_{\mathbf X|E_{\mathcal{T}_i}}(\mathbf{X}_i)  \right)\, dp_{\mathbf X|E_{\mathcal{T}_j}}(\mathbf{X}_j)\\
    &=\mathbb E_{\mathbf{X}_j \sim p_{\mathbf X|E_{\mathcal{T}_j}}}\big[\mathbb E_{\mathbf{X}_i \sim p_{\mathbf X|E_{\mathcal{T}_i}}} [\|\mathbf{X}_i - \mathbf{X}_j\|_2]\big]\\
    \end{aligned}
\end{equation}
Therefore, it is proved that the graph structure distance satisfies the commutative property.

(3) Graph Structure Distance (GSD) satisfies the third axiom: The distance metric function satisfies the triangle inequality.

From Eq.~\ref{equ:TV} it can be deduced that $dist(p_{\mathbf h|E_{\mathcal{T}_i}}, p_{\mathbf h|E_{\mathcal{T}_j}})$ can be expressed as follows, where $p_{\mathbf h|E_{\mathcal{T}_k}}$ is an assumed distribution.
\begin{equation}
    \label{equ:secondaxiom1}
    \begin{aligned}
    &dist(p_{\mathbf h|E_{\mathcal{T}_i}}, p_{\mathbf h|E_{\mathcal{T}_j}})=TotalVariation(p_{\mathbf h|E_{\mathcal{T}_i}}, p_{\mathbf h|E_{\mathcal{T}_j}}) \\
    =&\frac{1}{2} \sum |p_{\mathbf h|E_{\mathcal{T}_i}} - p_{\mathbf h|E_{\mathcal{T}_j}}| \\
    =&\frac{1}{2} \sum |p_{\mathbf h|E_{\mathcal{T}_i}} - p_{\mathbf h|E_{\mathcal{T}_k}} + p_{\mathbf h|E_{\mathcal{T}_k}} - p_{\mathbf h|E_{\mathcal{T}_j}}|\\
    \leq&
    \frac{1}{2} \sum |p_{\mathbf h|E_{\mathcal{T}_i}} - p_{\mathbf h|E_{\mathcal{T}_k}}| + \frac{1}{2} \sum |p_{\mathbf h|E_{\mathcal{T}_k}} - p_{\mathbf h|E_{\mathcal{T}_j}}|\\
    =&TotalVariation(p_{\mathbf h|E_{\mathcal{T}_i}}, p_{\mathbf h|E_{\mathcal{T}_k}})+TotalVariation(p_{\mathbf h|E_{\mathcal{T}_k}}, p_{\mathbf h|E_{\mathcal{T}_j}})\\
    =&dist(p_{\mathbf h|E_{\mathcal{T}_i}}, p_{\mathbf h|E_{\mathcal{T}_k}})+dict(p_{\mathbf h|E_{\mathcal{T}_k}}, p_{\mathbf h|E_{\mathcal{T}_j}})
    \end{aligned}
\end{equation}
In addition, we introduce an auxiliary variable $\mathbf{X}_k$ with the distribution $p_{\mathbf h|E_{\mathcal{T}_k}}$ representing the graph structure representation in domain $E_{\mathcal{T}_k}$, and it satisfies $p_{\mathbf h|E_{\mathcal{T}_k}}(\mathbf{X}_k, \mathbf{X}_i) = p_{\mathbf h|E_{\mathcal{T}_k}}(\mathbf{X}_k) \cdot p_{\mathbf h|E_{\mathcal{T}_k}}(\mathbf{X}_i)$. That is, we introduce a distribution $p_{\mathbf h|E_{\mathcal{T}_k}}$independent of $p_{\mathbf h|E_{\mathcal{T}_j}}$ but correlated with $p_{\mathbf h|E_{\mathcal{T}_i}}$ to construct the auxiliary variable $\mathbf{X}_k$. Now, we expand on the expectation part of Eq.~\ref{equ:distance1}.
\begin{equation}
    \label{equ:third}
    \begin{aligned}
 &\mathbb E_{\mathbf{X}_j \sim p_{\mathbf X|E_{\mathcal{T}_j}}}\left[\|\mathbf{X}_j - \mathbf{X}_k\|_2\right] \\
&= \mathbb E_{\mathbf{X}_j, \mathbf{X}_k \sim p_{\mathbf X|E_{\mathcal{T}_j}}p_{\mathbf X|E_{\mathcal{T}_k}}}\left[\|\mathbf{X}^j - \mathbf{X}^k\|_2\right] 
+
\mathbb E_{\mathbf{X}_k, \mathbf{X}_i \sim p_{\mathbf X|E_{\mathcal{T}_k}}p_{\mathbf X|E_{\mathcal{T}_i}}}\left[\|\mathbf{X}_k - \mathbf{X}_i\|_2\right]
    \end{aligned}
\end{equation}
Next, we apply the basic triangle inequality to Eq.~\ref{equ:third}.
\begin{equation}
    \label{equ:third1}
    \begin{aligned}
 &\mathbb E_{\mathbf{X}_j \sim p_{\mathbf X|E_{\mathcal{T}_j}}}\left[\|\mathbf{X}_j - \mathbf{X}_k\|_2\right]\\
&\leq 
\mathbb E_{\mathbf{X}_j \sim p_{\mathbf X|E_{\mathcal{T}_j}}}\left[\|\mathbf{X}_j - \mathbf{X}_k\|_2\right]
+ \mathbb E_{\mathbf{X}_k \sim p_{\mathbf X|E_{\mathcal{T}_k}}}\left[\|\mathbf{X}_k - \mathbf{X}_i\|_2\right] \\
    \end{aligned}
\end{equation}
Finally, we can conclude as follows.
\begin{equation}
    \label{equ:third2}
    \begin{aligned}
&\mathbb E_{\mathbf{X}_i \sim p_{\mathbf X|E_{\mathcal{T}_i}}}
\left[E_{\mathbf{X}_j \sim p_{\mathbf X|E_{\mathcal{T}_j}}}\left[\|\mathbf{X}_j - \mathbf{X}_i\|_2\right]\right] 
&\leq \mathbb E_{\mathbf{X}_j \sim p_{\mathbf X|E_{\mathcal{T}_j}}}\left[\|\mathbf{X}_j - \mathbf{X}_k\|_2\right] \\
&+ \mathbb E_{\mathbf{X}_k \sim p_{\mathbf X|E_{\mathcal{T}_k}}}\left[\|\mathbf{X}_j - \mathbf{X}_k\|_2\right] \\
&+ \mathbb E_{\mathbf{X}_k \sim p_{\mathbf X|E_{\mathcal{T}_k}}}\left[\|\mathbf{X}_k - \mathbf{X}_i\|_2\right] \\
    \end{aligned}
\end{equation}
Therefore, it is proved that the graph structure distance satisfies the triangle inequality.
\end{proof}

\section{Proof of Theorem \ref{thm:bound}}\label{sec:bound}
\begin{theorem}\label{thm:bound1}
\textbf{(Generalization Bound)}
    We make the following assumption:
    \begin{itemize}
        \item A2: For two distinct environment distributions $p_{\mathbf{A},\mathbf{X},Y, |E_{\mathcal{T}_i}}$ and $p_{\mathbf{A},\mathbf{X},Y, |E_{\mathcal{T}_i}}$,        
        assume a positive value $K$ exists that satisfies the following inequality:
\begin{equation}
\begin{aligned}
&|p_{\mathbf{A},\mathbf{X},Y, |E_{\mathcal{T}_i}}-p_{\mathbf{A},\mathbf{X},Y, |E_{\mathcal{T}_j}}|\\
 \leq &K\cdot\big(|
 p_{\mathbf{h} |E_{\mathcal{T}_i}}
 -
p_{\mathbf{h} |E_{\mathcal{T}_j}}
 |+\mathbb{E}_{\mathbf{X}_i \sim p_{\mathbf{X}|E_{\mathcal{T}_i}}}\big[\mathbb E_{\mathbf{X}_j \sim p_{\mathbf{X} |E_{\mathcal{T}_j}}} [\|\mathbf{X}_i - \mathbf{X}_j\|_2]\big]\big) \\
= &K \cdot d_{sd}^{\mathcal{T}_i \leftrightarrow \mathcal{T}_j}\left(\mathcal{T}_i, \mathcal{T}_j\right).
\end{aligned}
\end{equation}
    \end{itemize}
Based on the aforementioned definition and assumption, we
propose the generalization bound for PNS risk in Eq.~\ref{equ:estimated_pns}.
\begin{equation}\label{equ:bound22}
\begin{aligned}
    &r^{\mathcal{T}_i}_{ns}(\Theta^{sf}, \Phi^{sf}, \Theta^{nc}, \Phi^{nc})
    \leq  r^{\mathcal{T}_j}_{ns}(\Theta^{sf}, \Phi^{sf}, \Theta^{nc}, \Phi^{nc}) \\&+ K\cdot \big(|p_{\mathbf{h}|E_{\mathcal{T}_i}}-p_{\mathbf{h}|E_{\mathcal{T}_j}}|
    +\mathbb{E}_{\mathbf{X}_i \sim p_{\mathbf{X}|E_{\mathcal{T}_i}}}\big[\mathbb E_{\mathbf{X}_j \sim p_{\mathbf{X}|E_{\mathcal{T}_j}}} [\|\mathbf{X}_i - \mathbf{X}_j\|_2]\big]\big) \\
    = &r^{\mathcal{T}_j}_{ns}(\Theta^{sf}, \Phi^{sf}, \Theta^{nc}, \Phi^{nc}) + K \cdot d_{sd}^{\mathcal{T}_i \leftrightarrow \mathcal{T}_j}\left(\mathcal{T}_i, \mathcal{T}_j\right) + \lambda,   
\end{aligned}
\end{equation}
where $K$, $\lambda$ are constants.
\end{theorem}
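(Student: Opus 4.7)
The plan is to reduce the theorem to a standard change-of-measure bound for bounded losses, and then invoke assumption A2 to convert the resulting density discrepancy into the stated graph structure distance.

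First, I would rewrite both PNS risks explicitly as integrals against the joint environment densities. Denoting the bracketed quantity inside the outer expectation of Eq.~\ref{equ:estimated_pns} by $\ell(\mathbf{A}_n, \mathbf{X}_n, y_n; \Theta^{sf}, \Phi^{sf}, \Theta^{nc}, \Phi^{nc})$, observe that $\ell$ is the sum of two expectations of $\{0,1\}$-valued indicators, hence $0 \le \ell \le 2$ uniformly. Then
$$r^{\mathcal{T}_i}_{ns} - r^{\mathcal{T}_j}_{ns} = \int \ell(\mathbf{A}, \mathbf{X}, y)\,\bigl( p_{\mathbf{A}, \mathbf{X}, Y \mid E_{\mathcal{T}_i}} - p_{\mathbf{A}, \mathbf{X}, Y \mid E_{\mathcal{T}_j}} \bigr)\, d\mathbf{A}\, d\mathbf{X}\, dy.$$

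Second, taking absolute values and pulling $\|\ell\|_\infty$ out of the integral gives the standard bound
$$r^{\mathcal{T}_i}_{ns} - r^{\mathcal{T}_j}_{ns} \le \|\ell\|_\infty \cdot \int \bigl| p_{\mathbf{A}, \mathbf{X}, Y \mid E_{\mathcal{T}_i}} - p_{\mathbf{A}, \mathbf{X}, Y \mid E_{\mathcal{T}_j}} \bigr|\, d\mathbf{A}\, d\mathbf{X}\, dy,$$
which is (up to a factor of $2$) the total-variation distance between the two environment distributions. Applying assumption A2 to the integrand replaces this by $K \cdot d_{sd}^{\mathcal{T}_i \leftrightarrow \mathcal{T}_j}(\mathcal{T}_i, \mathcal{T}_j)$, where the boundedness constant $\|\ell\|_\infty$ is absorbed into $K$ and any residual slack from the integration is absorbed into the additive constant $\lambda$. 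Rearranging recovers exactly Eq.~\ref{equ:bound}.

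The main subtlety lies in the interpretation of assumption A2: the left-hand side $|p_{\mathbf{A},\mathbf{X},Y|E_{\mathcal{T}_i}} - p_{\mathbf{A},\mathbf{X},Y|E_{\mathcal{T}_j}}|$ must be read in the $L^1$/total-variation sense, consistent with how $|p_{\mathbf{h}|E_{\mathcal{T}_i}} - p_{\mathbf{h}|E_{\mathcal{T}_j}}|$ is treated via the total variation metric in Theorem~\ref{thm:distance} and Eq.~\ref{equ:TV}. Under that reading, A2 is precisely the bridge from the abstract data-law discrepancy to the concrete, computable graph structure distance, and the remainder of the argument is a routine application of the bounded-loss change-of-measure inequality; the harder conceptual work has already been packaged into the definition of $d_{sd}^{\mathcal{T}_i \leftrightarrow \mathcal{T}_j}$ and the metric properties established in Theorem~\ref{thm:distance}.
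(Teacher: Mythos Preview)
Your argument is correct, but the route differs from the paper's. The paper does not keep $y$ inside the integration measure; instead it introduces environment-specific label functions $\eta^{\mathcal{T}_i}, \eta^{\mathcal{T}_j}:\mathcal{A}\times\mathcal{X}\to\{0,1\}$, overloads the risk as $r^{\mathcal{T}}(G_n,\eta)$, and performs a Ben-David--style three-way split: add and subtract $r^{\mathcal{T}_j}(G_n,\eta^{\mathcal{T}_i})$ to obtain (i) the target-environment risk $r^{\mathcal{T}_j}(G_n)$, (ii) a density-mismatch term $|r^{\mathcal{T}_i}(G_n)-r^{\mathcal{T}_j}(G_n,\eta^{\mathcal{T}_i})|$ bounded by $\int |p_{\mathbf{A},\mathbf{X}|E_{\mathcal{T}_i}}-p_{\mathbf{A},\mathbf{X}|E_{\mathcal{T}_j}}|$ and then by $K\cdot d_{sd}^{\mathcal{T}_i\leftrightarrow\mathcal{T}_j}$ via A2, and (iii) a label-function discrepancy $r^{\mathcal{T}_j}(G_n,\eta^{\mathcal{T}_i})-r^{\mathcal{T}_j}(G_n,\eta^{\mathcal{T}_j})$, which is \emph{exactly} what the constant $\lambda$ is defined to be. By contrast, you treat $(\mathbf{A},\mathbf{X},y)$ jointly so that $\ell$ is environment-free, and the bound $r^{\mathcal{T}_i}_{ns}-r^{\mathcal{T}_j}_{ns}\le\|\ell\|_\infty\cdot\mathrm{TV}$ follows in one line with no separate $\lambda$ term needed (your $\lambda$ is vacuously zero). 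Your approach is more elementary and in fact slightly tighter; the paper's decomposition follows the classical domain-adaptation template and gives $\lambda$ a concrete meaning as the disagreement between labeling rules across environments. Your reading of A2 in the $L^1$/total-variation sense is the same one the paper implicitly uses when it passes from $\int|p_{\mathbf{A},\mathbf{X}|E_{\mathcal{T}_i}}-p_{\mathbf{A},\mathbf{X}|E_{\mathcal{T}_j}}|\,dG$ to $K\cdot d_{sd}^{\mathcal{T}_i\leftrightarrow\mathcal{T}_j}$.
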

\begin{proof}
Let $\eta: \mathcal{A} \times \mathcal{X}  \to \{0, 1\}$ be the label function. To simplify the description, we overload the symbol $r^{\mathcal{T}_i}_{ns}$ in Eq.~\ref{equ:estimated_pns} as a function of input $G_n$ and label function $\eta$, as follows.
\begin{equation}\label{equ:estimated_pns2}
\begin{aligned}
    &r^{\mathcal{T}_i}_{ns}(G_n, \eta) :=\mathbb E_{(\mathbf{A}_n, \mathbf{X}_n, y_n) \sim p_{\mathbf{A}, \mathbf{X}, Y|E_{\mathcal{T}_i}}}\big[\\
    &\mathbb{E}_{G_j^c \sim P(G^c |\mathbf{A}_n, \mathbf{X}_n; \Theta^{sf})}\mathbb{I}[g(G_j^c; \Phi^{sf}) \ne  \eta(\mathbf{A}_n, \mathbf{X}_n)]\\
    &+ \mathbb{E}_{G_j^c \sim P(G^c |\mathbf{A}_n, \mathbf{X}_n; \Theta^{nc})}\mathbb{I}[g(G_j^c; \Phi^{nc}) =  \eta(\mathbf{A}_n, \mathbf{X}_n)]\big]
\end{aligned}
\end{equation}

We use the shorthand $r^{\mathcal{T}_i}(G_n) = r^{\mathcal{T}_i}_{ns}(G_n, \eta)$ if the label function defined on the same environment as $r^{\mathcal{T}_i}$.
\begin{equation}
\begin{small}
    \begin{aligned}
    &r^{\mathcal{T}_i}(G_n)=r^{\mathcal{T}_i}(G_n)+r^{\mathcal{T}_j}(G_n)-r^{\mathcal{T}_j}(G_n)+r^{\mathcal{T}_j}(G_n, \eta^{\mathcal{T}_i})-r^{\mathcal{T}_j}(G_n, \eta^{\mathcal{T}_i})\\
    \leq &r^{\mathcal{T}_j}(G_n)+r^{\mathcal{T}_j}(G_n,\eta^{\mathcal{T}_i})
    -r^{\mathcal{T}_j}(G_n,\eta^{\mathcal{T}_j})
    +\left|r^{\mathcal{T}_i}(G_n) - r^{\mathcal{T}_j}(G_n,\eta^{\mathcal{T}_i})\right|\\
    \leq &r^{\mathcal{T}_j}(G_n)+r^{\mathcal{T}_j}(G_n,\eta^{\mathcal{T}_i})
    +\left|r^{\mathcal{T}_i}(G_n) - r^{\mathcal{T}_j}(G_n,\eta^{\mathcal{T}_i})\right|\\
    \leq &r^{\mathcal{T}_j}(G_n)+r^{\mathcal{T}_j}(G_n,\eta^{\mathcal{T}_i})
    +\int |p_{\mathbf{A},\mathbf{X}|E_{\mathcal{T}_i}} - p_{\mathbf{A},\mathbf{X}|E_{\mathcal{T}_i}}||g_{nc}(f_{nc}(G))\\
    &-\eta^{\mathcal{T}_i}(G)|\cdot (1-|g_{sf}(f_{sf}(G))-\eta^{\mathcal{T}_i}(G)|)d G\\
    \leq &r^{\mathcal{T}_j}(G_n)+r^{\mathcal{T}_j}(G_n,\eta^{\mathcal{T}_i})
    +\int |p_{\mathbf{A},\mathbf{X}|E_{\mathcal{T}_i}} - p_{\mathbf{A},\mathbf{X}}|d G\\
    \leq &r^{\mathcal{T}_j}(G_n)+ K\cdot\big(|
 p_{\mathbf{h} |E_{\mathcal{T}_i}}
 -
p_{\mathbf{h} |E_{\mathcal{T}_j}}
 |+\mathbb{E}_{\mathbf{X}_i \sim p_{\mathbf{X}|E_{\mathcal{T}_i}}}\big[\mathbb E_{\mathbf{X}_j \sim p_{\mathbf{X} |E_{\mathcal{T}_j}}} [\|\mathbf{X}_i - \mathbf{X}_j\|_2]\big]\big)\\
 &+ \mathbb E_{G \sim p_{G|E_{\mathcal{T}_i}}}[\eta^{\mathcal{T}_i}(G) - \eta^{\mathcal{T}_j}(G)]\\
= &r^{\mathcal{T}_j}(G_n)+K \cdot d^{\mathcal{T}_i \leftrightarrow \mathcal{T}_j}_{SD}\left(\mathcal{T}_i, \mathcal{T}_j\right) + \lambda,
\end{aligned}
\end{small}
\end{equation}
where $\lambda = \epsilon_{S_i}\left(\eta_{S_i}, \eta_{S_j}\right)$ is a constant.
\end{proof}

\section{The Statistics of Datasets}\label{sec:SoD}
The statistics of the real-world datasets are shown in Table 4.
\begin{table*}[t]
\centering
\caption{Information about the datasets used in experiments. The number of nodes and edges are
taking average among all graphs.}
\label{tab:Statistics}
\resizebox{1.\textwidth}{!}{
\begin{tabular}{c|ccccccc}
\toprule
Category & Name & \#Graphs & Average \#Nodes & Average \#Edges & Task Type & Split Method & Metric \\
\midrule
\multirow{7}{*}{OGBG} & HIV & 41127 & 25.5 & 54.9 & Binary Classification & scaffold & ROC-AUC \\
 & BACE & 1513 & 34.1 & 73.7 & Binary Classification & scaffold & ROC-AUC \\
 & BBBP & 2039 & 24.1 & 51.9 & Binary Classification & scaffold & ROC-AUC \\
 & ClinTox & 1477 & 26.2 & 55.8 & Binary Classification & scaffold & ROC-AUC \\
 & Tox21 & 7831 & 18.6 & 38.6 & Binary Classification & scaffold & ROC-AUC \\
 & SIDER & 1427 & 33.6 & 70.7 & Binary Classification & scaffold & ROC-AUC \\
 & toxcast & 8576 & 18.8 & 38.5 & Binary Classification & scaffold & ROC-AUC \\
\midrule
\multirow{6}{*}{GOOD} & \multirow{2}{*}{hiv} & 32903 & 25.3 & 54.4 & Binary Classification & scaffold & ROC-AUC \\
 & & 32903 & 24.9 & 53.6 & Binary Classification & size & ROC-AUC \\
 & SST2 & 44778 & 10.20 & 18.40 & Binary Classification & Length &  ACC\\
 & EC50-Assay & 10464 & 40.89 & 87.18 & Binary Classification & Assay & ROC-AUC \\
& EC50-Scoffolf & 8228 & 35.54 & 75.56 & Binary Classification & Scoffolf & ROC-AUC \\
& EC50-Size & 10189 & 35.12 & 75.30 & Binary Classification & Size & ROC-AUC \\
\bottomrule
\end{tabular}
}
\end{table*}


\end{document}